\theoremstyle{plain}
\newtheorem{theorem}{Theorem}[section]
\newtheorem{lemma}[theorem]{Lemma}
\theoremstyle{definition}
\newtheorem{assumption}[theorem]{Assumption}
\theoremstyle{remark}
\icmltitlerunning{Offline Training of Language Model Agents with Functions as Learnable Weights}
\newcommand{\cmark}{\ding{51}}%
\newcommand{\xmark}{\ding{55}}%
\definecolor{Gray}{gray}{0.9}
\definecolor{burntorange}{rgb}{0.8, 0.33, 0.0}
\DeclareMathOperator*{\argmin}{arg\,min}
\definecolor{codegreen}{rgb}{0,0.6,0}
\definecolor{codegray}{rgb}{0.5,0.5,0.5}
\definecolor{codepurple}{rgb}{0.58,0,0.82}
\definecolor{backcolour}{rgb}{0.95,0.95,0.92}
\lstdefinestyle{mystyle}{
    backgroundcolor=\color{backcolour},   
    commentstyle=\color{codegreen},
    keywordstyle=\color{magenta},
    numberstyle=\tiny\color{codegray},
    stringstyle=\color{codepurple},
    basicstyle=\tiny\ttfamily,
    breakatwhitespace=false,         
    breaklines=true,                 
    captionpos=b,                    
    keepspaces=true,                 
    showspaces=false,                
    showstringspaces=false,
    showtabs=false,                  
    tabsize=2
}
\begin{document}
\twocolumn[
\icmltitle{Offline Training of Language Model Agents with Functions as Learnable Weights}



\icmlsetsymbol{equal}{*}

\begin{icmlauthorlist}
\icmlauthor{Shaokun Zhang}{equal,psu}
\icmlauthor{Jieyu Zhang}{equal,uw}
\icmlauthor{Jiale Liu}{psu}
\icmlauthor{Linxin Song}{usc}
\icmlauthor{Chi Wang}{msr}
\icmlauthor{Ranjay Krishna}{uw}
\icmlauthor{Qingyun Wu}{psu}
\end{icmlauthorlist}

\icmlaffiliation{psu}{Pennsylvania State University}
\icmlaffiliation{uw}{University of Washington}
\icmlaffiliation{usc}{University of Southern California}
\icmlaffiliation{msr}{Microsoft Research}

\icmlcorrespondingauthor{Qingyun Wu}{qingyun.wu@psu.edu}

\icmlkeywords{Machine Learning, ICML}

\vskip 0.3in
]

\printAffiliationsAndNotice{\icmlEqualContribution} 
\begin{abstract}
Researchers and practitioners have recently reframed powerful Large Language Models (LLMs) as \emph{agents}, enabling them to automate complex tasks largely via the use of specialized functions.
To facilitate the development of LLM agents, we present a novel paradigm of training LLM agents without modifying the LLM weights, which is particularly useful when the LLMs are difficult or inaccessible for modifications.
Inspired by how humans continuously forge tools to adapt to real-world tasks, rather than change our biological structure to fit a static set of tools, we propose to progressively forge agent's functions to better solve the downstream tasks instead of modifying the LLM weights. 
By treating the functions as learnable `agent parameters' and leveraging the fundamental idea of model training in artificial intelligence, we develop AgentOptimizer that employs the LLM to update agents' functions and devise an \emph{agent training} algorithm with two strategies, roll-back, and early-stop, to streamline the training process. 
With extensive experiments, we showcase that the agent training paradigm could significantly improve the performance of representative LLM agents in various downstream tasks. 
We also study the behavior of the agent training regarding aspects like the learning curve and domain transferability. We have integrated our method into \href{https://github.com/microsoft/autogen/blob/main/notebook/agentchat_agentoptimizer.ipynb}{AutoGen} library.  
\end{abstract}

\section{Introduction}
\label{intro}

Reframing Large Language Models (LLMs) as agents has ushered in a new paradigm of automation—one where LLMs can utilize existing functions~\footnote{Note that the literature has used the term `functions' to sometimes refer to tools or other actions.}  to accomplish complex tasks~\cite{xi2023rise, wang2023survey,yao2022react,wang2023voyager,humphreys2022data, ALFWorld20}. 
For example, LLM agents, armed with a function to `search over Wikipedia' can answer knowledge questions; agents with the ability to `issue SQL queries' can search large databases.
Functions allow LLMs to access external knowledge sources~\cite{peng2023check}, offload numerical computation~\cite{wu2023empirical}, search the internet~\cite{shi2017world}, and much more~\cite{qin2023tool}.

\begin{figure}[!t]
\begin{center}
\centerline{\includegraphics[width=0.8\columnwidth]{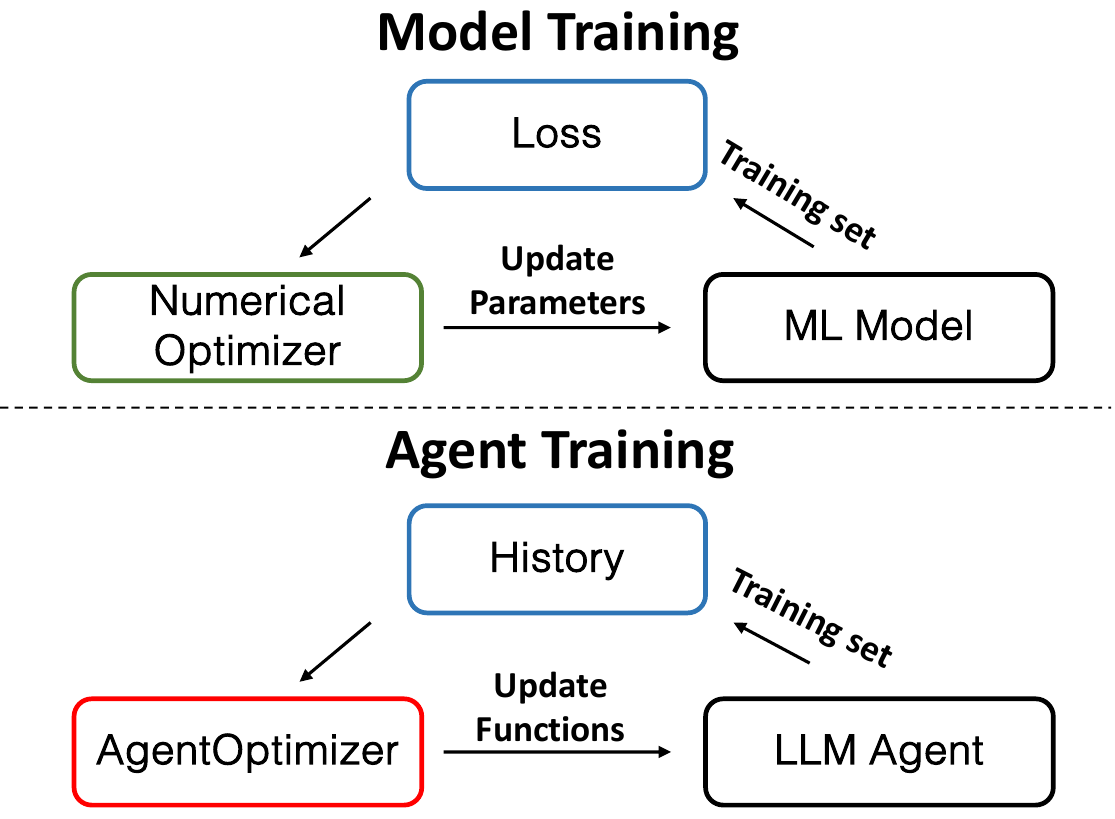}}
\caption{The comparison between model training and agent training. In model training, \emph{numerical optimizers}~\cite{ruder2016overview} such as SGD and Adam optimize the model weights according to the loss on the training set. In contrast, agent training iteratively updates the agents' functions according to the execution history using the proposed \emph{AgentOptimizer}.}
\label{demo}
\end{center}
\vspace{-0.4in}
\end{figure}

To enable LLM agents with useful functions, users need to first manually create functions that would be helpful for specific downstream tasks. This curation process may require many iterations and, therefore, be time-consuming. 
Since LLMs are black boxes, researchers have found that LLMs unexpectedly fail to utilize certain kinds of functions~\cite{qin2023tool}. In response, researchers have tried to improve the underlying LLM's capability of using existing functions by finetuning the LLM with ground truth function calls~\cite{qin2023toolllm,zeng2023agenttuning}. This finetuning process requires large computing resources. Worse, it limits which LLMs can be used since many LLM models are proprietary. 

Inspired by the fact that human-made tools become an extension of the human user ~\cite{botvinick1998rubber} and how humans forge tools to best adapt to real-world tasks, rather than change the biological structure of the human to fit a static set of tools, we propose a new \emph{agent training} paradigm that `forge' the functions for the LLM agent to use to best adapt to the training tasks.
It addresses both aforementioned challenges at the same time since it does not require finetuning the underlying LLM and could start with an empty set of functions. While the LLM's parameters are never updated, its functions are optimized to maximize the agent's ability to solve tasks.

In specific, we draw an analogy between traditional model training and our agent training (Figure~\ref{demo}). (1) Instead of updating model parameters, our training process updates the functions for LLM agents, viewing them as the agent's `trainable parameters'. (2) Instead of a loss calculated over a training set, our training process uses the agent's execution history and performance on training tasks
as the basis for updating the agent's functions.
Since we operate in the space of functions, numeric optimizers such as SGD or Adam are not applicable. Instead, we develop \textbf{AgentOptimizer}, which leverages the LLM to update the agent's functions based on the execution history and the agent-generated / ground truth answer from the current epoch.
In particular, the AgentOptimizer is instructed to progressively update the current function set by performing one of the predefined function manipulation actions (add, revise, and remove), rather than regenerate the whole function set at each optimization step.  

With AgentOptimizer, the overall workflow of LLM agent training is as follows: given a training set and an empty function set, at each epoch, we first evaluate the agent system against the training set and collect the execution history as well as the agent-generated / ground truth answers, then we feed this information to the AgentOptimizer to perform an optimization step to update the current function set.
To avoid potential performance degradation caused by function updates, we introduce two simple strategies: roll-back and early-stop.
The former is to withdraw the current function updates if the performance over the training set is degraded and roll back to the previous status, while the latter is to early terminate the optimization process when a certain number of consecutive optimization steps do not introduce any performance boost over the training set.

We conducted extensive empirical evaluations on three distinct tasks: mathematical reasoning (MATH)~\cite{hendrycks2021measuring}, tabular processing (TabMWP)~\cite{lu2023dynamic}, and general real-world problems (GAIA)~\cite{mialon2023gaia}. We trained two typical agent systems, GPT-4+ agent~\cite{openai2023gpt} and ReAct agent~\cite{yao2022react}, using the agent training method.
For the MATH dataset, agent training resulted in an obvious performance improvement in almost all cases. For more realistic and complex tasks GAIA and TabMWP, agent training led to an average performance improvement of 6\% and 8.5\% in GPT-4+ agent and ReAct agents, respectively.
We also perform ablation to demonstrate the efficacy of different components of the agent training method. In addition to ablation, we analyzed its extension to large-scale training and its transferability across different domains.

Our contributions are summarized below:
\begin{itemize}
    \item \textbf{(Paradigm)} Inspired by the fundamental idea of model training in machine learning, we introduce a tailored paradigm for training LLM agents without modifying the LLMs to build specialized LLM agents for a given application: we establish analogies between the learnable parameters inherent in traditional models and the operational functions of LLM agents, as well as between the models' loss functions and the agents' execution history over the training set, to craft a training regime that enhances the LLM agents' capabilities;
    \item \textbf{(Methodology)} To realize this paradigm, we propose the AgentOptimizer as an alternative to numeric optimizers used in traditional model training. It is designed to operate in the space of the operational functions of LLM agents via the exceptional capabilities of LLMs. Based on the AgentOptimizer, we develop a training algorithm with two additional techniques (roll-back and early-stop) that streamline the training process; 
    \item \textbf{(Experiments)} We conduct extensive experiments on three distinct tasks in training two typical agent systems, the GPT-4+ agent and the ReAct agent, to showcase the advantage of the proposed paradigm. We also provide ablation studies and detailed analysis to understand the behavior of the agent training.
\end{itemize}

\section{Methodology}

We begin by defining notations and setting up the research problem. 
We use $S_{\mathcal{F}}$ to denote any LLM agent system with function set $\mathcal{F}=\{f_{1}, ..., f_{n} | \forall i \in [n], f_{i} \in \mathcal{V}\}$. $f_{i}$ denotes the $i_{th}$ function that can be used by agent system $S_{\mathcal{F}}$ in the function space $\mathcal{V}$.

Throughout this work, we assume black-box LLMs such as ChatGPT~\cite{openai2022gpt} in the form of LLM as services.
Given any task with training data $\mathcal{D}_{train}$ and test data $\mathcal{D}_{test}$, the goal of this study is to find a set of functions $\mathcal{F^{*}}$ that could improve the LLM agent's expected performance on unseen test data $\mathcal{D}_{test}$. To put it more formally, 

\begin{equation}
\label{eq:final_goal}
\mathcal{F^{*}} = \argmin_{\mathcal{F}\subset\mathcal{V}}E[Loss(S_{\mathcal{F}}, \mathcal{D}_{test})],
\end{equation}
where $Loss(S_{\mathcal{F}}, \mathcal{D}_{test})$ measures the average loss of the agent $S_{\mathcal{F}}$ on test data $\mathcal{D}_{test}$.  
In the context of agent training throughout this paper, loss is defined as the rate of failed problem-solving attempts using agent systems.

However, the test set and its distribution are not available. In traditional machine learning model training, it is a common practice to assume that the distribution of the training and test data are the same or similar. 
While this assumption doesn’t always hold, in machine learning practice, training loss is used ubiquitously as the primary metric for parameter selection as a compromise solution.

Following the same spirit, we also employ training data as a proxy for test data. Then optimizing the functions of the language agent in function space by minimizing the loss of training data. This approach allows us to approximate the performance of the language agent on unseen test data, i.e.,
$
\hat{\mathcal{F}} = \argmin_{\mathcal{F}\subset\mathcal{V}}Loss(S_{\mathcal{F}}, \mathcal{D}_{train}),
$
where $\hat{\mathcal{F}}$ is approximation of $\mathcal{F^{*}}$.

\subsection{The AgentOptimizer}

To obtain $\hat{\mathcal{F}}$, it is critical to develop an optimizer tailored for agent training: it should be capable of updating current functions according to the agent system's performance on the training set. In contrast to traditional model training where the optimization is conducted over a numeric model parameter space and derivative-based optimizers can be applied with a loss of choice, agent training aims to search for the optimal set of functions for the agent system and therefore existing numeric model optimizers are not applicable.

Considering these, we propose the \textbf{AgentOptimizer} which leverages LLMs' exceptional capability of understanding and generating language to iteratively update the current set of functions as an optimizer.
Specifically, at each optimization step, we prompt the AgentOptimizer with the current status of the agent system and its execution history and performance on the training set and instruct it to update the functions of the agent system.
Intuitively speaking, this iterative optimization paradigm could lead to the identification of optimal functions in a large language space, analogous to iteratively performing gradient descent when training traditional machine learning models.

\paragraph{The input to the AgentOptimizer.}

We use $H$ to denote the information used to prompt LLMs, which mainly comprises the following two parts:
1) The execution history of the agents in solving each problem of the training set, including the details of how the agent uses current functions and 
2) the final performance over the training data.
In addition, we include the current set of functions associated with the agent system as input.
This information is necessary for the AgentOptimizer to be aware of the current state of the agent system and accordingly suggest function updates.

\begin{algorithm}[htb!]\small
\captionsetup{font=small} 
\SetNoFillComment
\SetAlCapNameFnt{\small}
\SetAlCapFnt{\small}
\renewcommand{\baselinestretch}{0.5} 

\caption{Progressive Function Update (\texttt{AgentOptimizer.step})}
\label{alg:2}
\KwIn{Functions to be optimized $\mathcal{F}^{0}$, historical information $H$}
\KwOut{Updated agent functions $\tilde{\mathcal{F}}$}
\textbf{Initialization:} $\tilde{\mathcal{F}} \gets \mathcal{F}^{0}, t \gets 0$ 

\While{$t < \texttt{MAX\_NUM}$}{
\texttt{Action} $\gets$ \texttt{LLM}($\mathcal{F}^{t}$, $H$)

\If{\texttt{Action} = \texttt{TERMINATE}}{~~\texttt{break}}
\Else{~~\tcp{add/revise/remove function}  $~~\mathcal{F}^{t+1} = \texttt{Action}(\mathcal{F}^{t}) \\\tilde{\mathcal{F}} \gets \mathcal{F}^{t+1} $}
$~~t \gets t+1$
}
\textbf{Return} $\tilde{\mathcal{F}}$
\end{algorithm}

\paragraph{Progressive function update.}
Given the inputted information $H$, a naive way of updating the current functions is to instruct the LLM to regenerate the whole function set to replace the existing one.
However, such an aggressive optimization step is unwise since it overwrites all existing functions, discards useful functions already established, and requires the LLM to generate multiple functions in a single shot.
In contrast, we propose to progressively update the functions via predefined \emph{actions} within each optimization step. 
In particular, we adopt four actions:
1) \emph{add\_function}: add one new function that may be useful; 2) \emph{revise\_function}: revise one existing function; 3) \emph{remove\_function}: remove one existing function; and 4) \emph{TERMINATE}: terminate the function update process.
Except for the TERMINATE action, all the actions 
require certain \emph{action arguments} as input; For example, to perform the \emph{add\_function} action, the LLM needs to generate the name, description, code, etc. of the function as the action arguments so that when executed, this action will add the new function to the function set. More details are presented in Appendix~\ref{optimizer-actions}.
At each time step, the AgentOptimizer is prompted to choose one action until the maximum number of actions is reached or the AgentOptimizer chooses \texttt{TERMINATE}, and the resulting function set will be returned.
The overall procedure of progressive function update is shown in Algorithm~\ref{alg:2}. 
\subsection{Agent Training}

With the AgentOptimizer, we then present the overall agent training procedure.
In practice, the function updates suggested by the AgentOptimizer may cause performance degradation, since the LLM is not the oracle for updating the functions. 
Therefore, we propose two simple strategies for the training procedure to avoid performance degradation.

\paragraph{Roll-back.}

To avoid performance degradation after function updating, we employ a simple yet effective strategy named roll-back. Specifically, at each optimization step, if the latest function update leads to a performance drop on the training set, the AgentOptimizer will withdraw this update. Moreover, considering the fact that LLMs are shown to be able to recognize patterns from in-context demonstrations~\cite{wei2022chain}, we also record the failed updated function and the corresponding performance in a list (Line 11 of Algorithm~\ref{alg:1}).  This list will be used as the prompt for the next function generations from AgentOptimizer. We expect that LLMs could use the historical failure information to generate better functions. The list will be cleared after achieving performance improvement.

\paragraph{Early-stop.}

In extreme situations, the optimization process may stuck, and rollback repeats without improving the performance. In this case, it is wise to terminate the optimization process, and we employ an early-stop strategy: the optimization process will be automatically terminated after $C$ consecutive optimization steps without performance improvement over the training set.

\paragraph{Overall agent training algorithm.}
The pseudocode of the agent training is shown in Algorithm~\ref{alg:1}.
The agent training process takes as input the following parameters: training data $\mathcal{D}_{train}$, agent system $S$, maximum training epoch $E$, and the early-stop threshold $C$.
After an initialization step, which sets the initial functions list $\mathcal{F}_{0}$ and initial historical information $H_{0}$ to empty sets, the algorithm proceeds as follows: at each iteration $i$,  the AgentOptimizer optimizes the functions list $\mathcal{F}_{i}$ to obtain $\mathcal{F}_{i+1}$ based on historical information $H_{i}$.  The updated function set is then evaluated on the training set to obtain evaluation information for the next epoch of training. The training procedure terminates when the maximum epoch or early-stop threshold is reached.

\begin{algorithm}[!htb]\small
\SetNoFillComment
\SetAlCapNameFnt{\small}
\SetAlCapFnt{\small}
\renewcommand{\baselinestretch}{0.5}
\caption{Agent Training}
\label{alg:1}
\KwIn{Training Data $\mathcal{D}_{train}$, agent system $S$, max training epoch $E$, early-stop threshold $C$}
\KwOut{Enhanced agent system $S_{\mathcal{\hat{F}}}$}
\textbf{Initialization:} $i \gets 0, r \gets 0, H_{0} \gets \varnothing, \mathcal{F}_{0} \gets \varnothing$.

\While{$i < E$}{
\If{$H_{i} \neq \varnothing$}{$~~\mathcal{F}_{i+1}$ = \texttt{AgentOptimizer.step}($\mathcal{F}_{i}$, $H_{i}$)}
\Else{$~~\mathcal{F}_{i+1} \gets \mathcal{F}_{i}$}
$H_{i+1}$ = \texttt{Eval}($S_{\mathcal{F}_{i+1}}$, $\mathcal{D}_{train}$)

\If{$H_{i+1}.loss$ $<$ $H_{i}.loss$}{$~~H_{i}$.\texttt{fail\_record} $\gets \emptyset, \\ ~~\mathcal{\hat{F}} \gets \mathcal{F}_{i+1}, i \gets i +1$, $r$ $\gets$ $0$}
\Else{$~~H_{i}$.\texttt{failure\_record} $\gets$ ($\mathcal{F}_{i+1}, H_{i+1}.loss)$\\
$r$ $\gets$ $r$ $+ 1$

~~\If{$r$ $>$ $C$}{\tcp{Early stop} ~~\texttt{Break}}
} 
}
\textbf{Return} $S_{\mathcal{\hat{F}}}$ 

\end{algorithm}
\vspace{-0.2in}

\section{Experiments}
We conduct experiments to prove the superiority of the proposed method. 
We begin by providing the experimental settings in Section~\ref{sec_expsetup}. 
We then evaluate the agent training method on three datasets to verify its effectiveness in Section~\ref{section_main_result}. 
Finally, we perform in-depth investigations in the last three sections to provide a better understanding of the proposed agent training method.

\begin{table*}[t]
\renewcommand{\arraystretch}{0.2} 
\setlength{\tabcolsep}{2pt} 
\scalebox{0.78}{
\renewcommand{\arraystretch}{1.3}
\begin{tabular}{ccccccccccccccc}
\toprule[1.5pt]
\multicolumn{1}{c}{\multirow{2}{*}{\textbf{Data types}}} & \multicolumn{2}{c}{P.Algebra} & \multicolumn{2}{c}{Algebra} & \multicolumn{2}{c}{I.Algebra} & \multicolumn{2}{c}{Geometry} & \multicolumn{2}{c}{C.Probability} & \multicolumn{2}{c}{Precalculus} & \multicolumn{2}{c}{N.Theory} \\ \cline{2-15} 
\multicolumn{1}{c}{}                               & \multicolumn{1}{c}{Train} & \multicolumn{1}{c}{Test} & \multicolumn{1}{c}{Train} & \multicolumn{1}{c}{Test} & \multicolumn{1}{c}{Train} & \multicolumn{1}{c}{Test} & \multicolumn{1}{c}{Train} & \multicolumn{1}{c}{Test} & \multicolumn{1}{c}{Train} & \multicolumn{1}{c}{Test} & \multicolumn{1}{c}{Train} & \multicolumn{1}{c}{Test} & \multicolumn{1}{c}{Train} & \multicolumn{1}{c}{Test}         \\ \hline
GPT-4+ Agent w/o Agent Training                         &   60.0\%              & 78.8\%                &    55.0\%            & \textbf{66.3\% }            &30.0\%                  & 30.0\%                 &  30.0\%              & 40.0\%               &    65.0\%              & 72.5\%                &     5.0\%             &  32.5\%                &     70.0\%            &   56.3\%            \\
\rowcolor[gray]{0.9} GPT-4+ Agent w/ Agent Training                          &    \textbf{65.0\% }             & \textbf{82.6\% }               &    \textbf{65.0\% }           &      65.0\%        &   \textbf{40.0\% }               & \textbf{38.8\% }               &   \textbf{40.0\% }            &  \textbf{42.5\% }             & \textbf{65.0\% }                & \textbf{76.3\% }              &    \textbf{10.0\% }             &  \textbf{35.0\% }              & \textbf{80.0\% }               & \textbf{67.5\% }              \\
ReAct Agent w/o Agent Training                           &     55.0\%             &    87.5\%            &     55.0\%           &  \textbf{83.8\% }           &    25.0\%               &50.0\%                 &   5.0\%             &  53.8\%             &  45.0\%                &      73.8\%         &      5.0\%            &  53.8\%               &  75.0\%               & 68.8\%              \\
\rowcolor[gray]{0.9} ReAct Agent w/ Agent Training                            & \textbf{55.0\%}                & \textbf{87.5\%}             &      \textbf{60.0\%}         &  82.5\%           &    \textbf{35.0\%}              &    \textbf{51.3\%}            &         \textbf{15.0\%}      &   \textbf{58.8\%}           &        \textbf{50.0\%}         &     \textbf{78.8\%}          &          \textbf{10.0\%}       &        \textbf{62.5\%}        &     \textbf{75.0\%}           & \textbf{72.5\%}              \\
 \bottomrule[1.5pt]
\end{tabular}
}
\vskip -0.09in
\caption{Train/Test accuracy of GPT-4+/ReAct agents with/without agent training on MATH datasets. We show the accuracy of each data type. We can observe that agent training could lead to an obviously better performance for both two agent systems in most cases. 
}
\label{tab:math_main}
\end{table*}

\begin{table*}[!t]
\centering
\setlength{\tabcolsep}{23.1pt}
\scalebox{1.0}{
\renewcommand{\arraystretch}{0.9} 
\begin{tabular}{lcccc}
\toprule
\multicolumn{1}{c}{\multirow{2}{*}{\textbf{Method}}} & \multicolumn{2}{c}{GAIA} & \multicolumn{2}{c}{TabMWP} \\ 
\cmidrule(lr){2-3}
\cmidrule(lr){4-5}
                        & Train     & Test    & Train      & Test   \\ \hline
GPT-4+ Agent w/o Agent Training                & 10.0\%     &  16.0\%  & 30.0\%      & 51.0\%      \\
\rowcolor[gray]{0.9}  GPT-4+ Agent w/ Agent Training   & \textbf{30.0\%}     &  \textbf{23.0\%}  & \textbf{66.7\%}      & \textbf{56.0\%}      \\ 
ReAct Agent w/o Agent Training                  &  20.0\%    &  12.0\%  & 63.3\%      &  59.0\%     \\
\rowcolor[gray]{0.9} ReAct Agent w/ Agent Training                & \textbf{40.0\%}     &  \textbf{18.0\%}  & \textbf{73.3\%}      & \textbf{70.0\%}      \\
\midrule[.1em]
\end{tabular}}
\vskip -0.09in
\caption{Train/Test accuracy of GPT-4+/ReAct agents with/without agent training on the GAIA and TabMWP datasets. 
We can observe that agent training can lead to greater performance for both GPT-4+ and ReAct agents on both two datasets.}
\label{tab:gaiatab_main}
\end{table*}

\begin{table*}[!t]
\centering
\setlength{\tabcolsep}{4pt} 
\renewcommand{\arraystretch}{0.9}
\begin{tabular}{cccc}
\toprule
\textbf{Method}         & Number Theory & Intermediate Algebra & Counting and probability \\ \hline
 
No Agent Training &  56.3\%        &   30.0\%               &  72.5\%                     \\
Agent Training w/o Roll-back \& Early-stop      &  63.8\%         &   36.3\%               &  72.5\%                     \\
Agent Training w/o Progressive Function Update     &  60.0\%       &   28.8\%               &  70.0\%         \\

\rowcolor[gray]{0.9} Agent Training (Ours) & \textbf{67.5\% }               &  \textbf{38.8\% }                    &  \textbf{76.3\%}                        \\ 
\bottomrule
\end{tabular}
\caption{We take the training of the GPT-4+ agent as an example and perform ablation to investigate the effect of different components of the agent training method on three data types of the MATH dataset.}
\label{tab:abla}
\end{table*}

\subsection{Experimental Setup}
\label{sec_expsetup}
\paragraph{Evaluation tasks and metrics.}
To evaluate the effectiveness of the proposed agent training, we conducted experiments on three distinct tasks: \emph{Mathematical Reasoning}, \emph{Tabular Processing}, and \emph{General Real-World Tasks}. 
Due to the high cost of OpenAI models, it is impractical to evaluate the method on the complete datasets and therefore we subsample data from these datasets for training and testing, following the same settings as previous works~\cite{yuan2023craft, wu2023empirical}. The number of training examples is set according to the LLM's context limit.

\textbf{(1) Mathematical reasoning:}
Following a similar setting with \cite{yuan2023craft}, we use a subset of MATH datasets~\cite{hendrycks2021measuring} to evaluate the LLM agent's performance in addressing mathematical problems. 
For each data type (7 in total), we randomly choose 20 training examples and 80 test examples, and report the accuracy of each data type respectively. 

\textbf{(2) Tabular processing:} 
The TabMWP~\cite{lu2023dynamic} dataset evaluates agents in processing structured data in tables, where each data sample contains one table and one question in natural language. We randomly sampled 100 test examples and 10 training examples. We measured the model performance using accuracy based on the exact match.

\textbf{(3) General real-world tasks:} The GAIA dataset~\cite{mialon2023gaia} is dedicated to evaluating the LLM agents in solving unambiguous real-world questions. From its public subset, we randomly select 10 questions for training and 100 questions for testing and report the correct rate as suggested in the original paper. 

\paragraph{Agent systems employed.}
We employ the proposed agent training method to train two typical LLM agent systems:

\textbf{(1) GPT-4+ agent:}  GPT-4+ agent essentially is GPT-4 with function call and code interpreter.
The GPT-4 plays the role of making reasoning decisions, while the code interpreter executes code and function calls suggested by the GPT-4.

\textbf{(2) ReAct agent:}  The ReAct agent~\cite{yao2022react} generates both reasoning traces and task-specific actions in an interleaved manner to solve tasks. In our evaluations, we optimized the ReAct agent to improve its actions that may be taken at each action step after a reasoning process.

For both the GPT-4+ agent and ReAct agent, we initialize them with Python as the initial function that can execute the Python code suggested by the LLMs

\paragraph{Models.}
For the more challenging tasks on the MATH and GAIA datasets, we used GPT-4-1106-preview for both AgentOptimizer and LLMs agents. For the easier task TabMWP, we chose to use GPT-3.5-turbo-1106 to construct LLMs agents and GPT-4-1106-preview to construct the AgentOptimizer. This was done to better visualize the improvement brought by the agent training and did not sacrifice the conclusions obtained from the experiments.

\subsection{Main Results}
\label{section_main_result}

\paragraph{Mathematical reasoning.}

We first evaluated the performance of GPT-4+ agent and ReAct agent on the MATH dataset, as well as their performance after agent training on train/test splits, as shown in Table~\ref{tab:math_main}. Across seven data types, we observed that agent training led to better performance on the test set in most cases (11 out of 14). Additionally, training performance improved in almost all cases, while in the remaining cases, it remained the same.  
Our results indicate that agent training could produce functions useful for unseen test tasks. Interestingly, for counting and probability problems, when training GPT-4+ agent, the training performance remains the same while test performance improves from 72.5\% to 76.3\%. 
This suggests that in specific situations, even if the generated functions do not lead to performance improvement on the training set, they are helpful for the unseen test data.
\vspace{-0.15in}

\paragraph{Tabular processing and general real-world tasks.}
We then perform evaluations on Tabular Processing tasks TabMWP~\cite{lu2023dynamic} and general real-world tasks GAIA~\cite{mialon2023gaia}, as shown in Table~\ref{tab:gaiatab_main}. Our observations indicate that agent training led to performance improvements for both two agent systems. Since these two datasets are more realistic and complex than MATH, our results demonstrate that agent training can generate general and usable functions that increase agents’ realistic task-solving capabilities, indicating that agent training is practically useful to some extent

\subsection{Ablation and Analysis}
\label{sec_analysis}

\subsubsection{Ablation}

We conducted ablation experiments to evaluate the effectiveness of two different components of the agent training method: (1) roll-back \& early-stop, and (2) progressive function updating. 
To achieve this goal, we chose three data types of MATH that resulted in the largest performance improvements in training GPT-4+ agent: number theory, intermediate algebra, and counting and probability
\footnote{Pre-Algebra was not selected due to its similarity to Intermediate Algebra, despite having the same performance improvements as counting and probability.}.
Specifically, to investigate (1), we removed roll-back and early-stop and trained the agent until reaching the maximum epoch number. The agent status will not roll-back when the training performance drops. For (2), we replaced the progressive function update with a one-step function generation, which directly prompted the GPT-4 in AgentOptimizer to generate the functions at each epoch. We also showed the origin GPT-4+ agent performance without agent training.

As shown in Table~\ref{tab:abla}, the performance greatly dropped if either one of them was removed. Another interesting observation is that agent training without progressive function update even exhibited worse performance than the origin GPT-4+ agent without agent training. This scenario proves that prompting LLMs to generate functions is non-trivial. A bad function generation method may even lead to a negative effect. Therefore, a carefully designed function generation algorithm is desirable.

\begin{table*}[!t]
\centering
\setlength{\tabcolsep}{10pt} 
\renewcommand{\arraystretch}{0.8}
\begin{tabular}{ccccc}
\toprule[1.5pt]
\textbf{Method}                     & MATH - Train & MATH - Test & TabMWP - Train & TabMWP - Test \\ \hline
CREATOR~\cite{qian2023creator}    &   N/A     &  75.0\%           &       N/A         &  30.0\%             \\ 
CRAFT~\cite{yuan2023craft}   &    50.0\%    &  73.8\%     &     38.0\%           &    38.5\%           \\
\hline
\rowcolor[gray]{0.9} GPT-4+ Agent w/ Agent Training  &  \textbf{60.0}\%  & 66.25\%      &   \textbf{66.6\%}             &  \textbf{56.0\%} \\ 
\rowcolor[gray]{0.9} ReAct Agent w/ Agent Training &  \textbf{60.0\%}  &  \textbf{77.5\%}  &  \textbf{73.3\%}              &  \textbf{70.0\%}             \\
\bottomrule[1.5pt]
\end{tabular}
\vskip -0.09in
\caption{
The comparisons between the trained agent systems with two typical tool-creation methods on MATH and TabMWP datasets. 
CREATOR doesn't involve a training stage so the training performance is unavailable.
The results indicate that both GPT-4+ agent and ReAct agent trained with our method outperform tool-creation methods in most cases.
}

\label{compare_with_tools}
\vspace{-0.2in}
\end{table*}

\subsubsection{Learning curve}

\begin{figure}[!htb]
    \begin{subfigure}{0.48\linewidth}
    \centering
    \includegraphics[width=\linewidth]{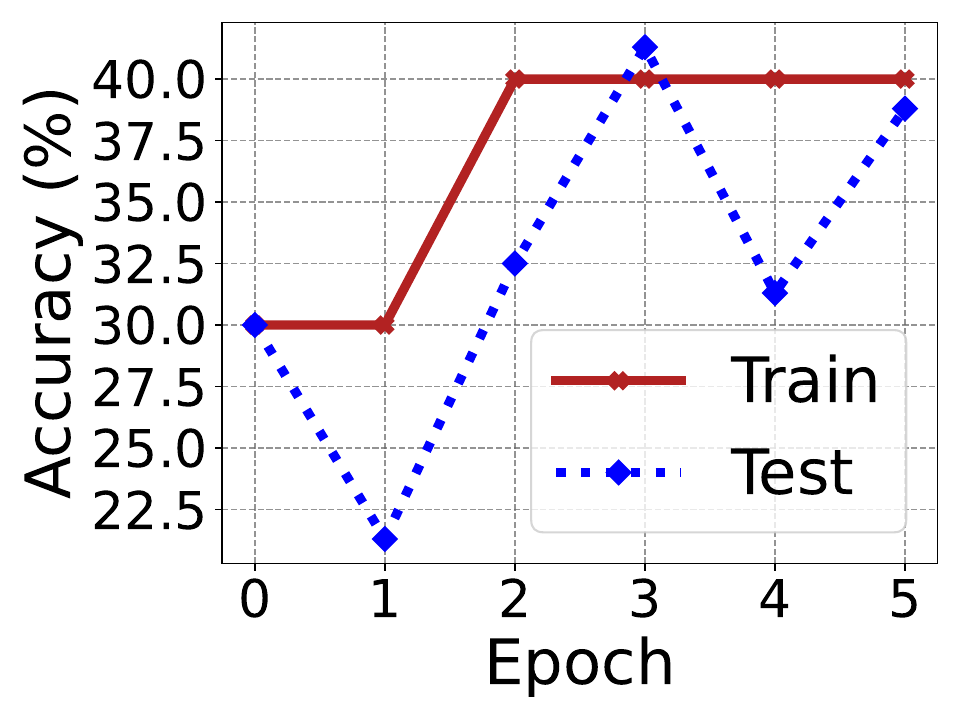}
    \caption{Positive - I.Algebra}\label{learning_curve_a}
    \end{subfigure}%
    \centering
    \begin{subfigure}{0.48\linewidth}
    \centering
    \includegraphics[width=\linewidth]{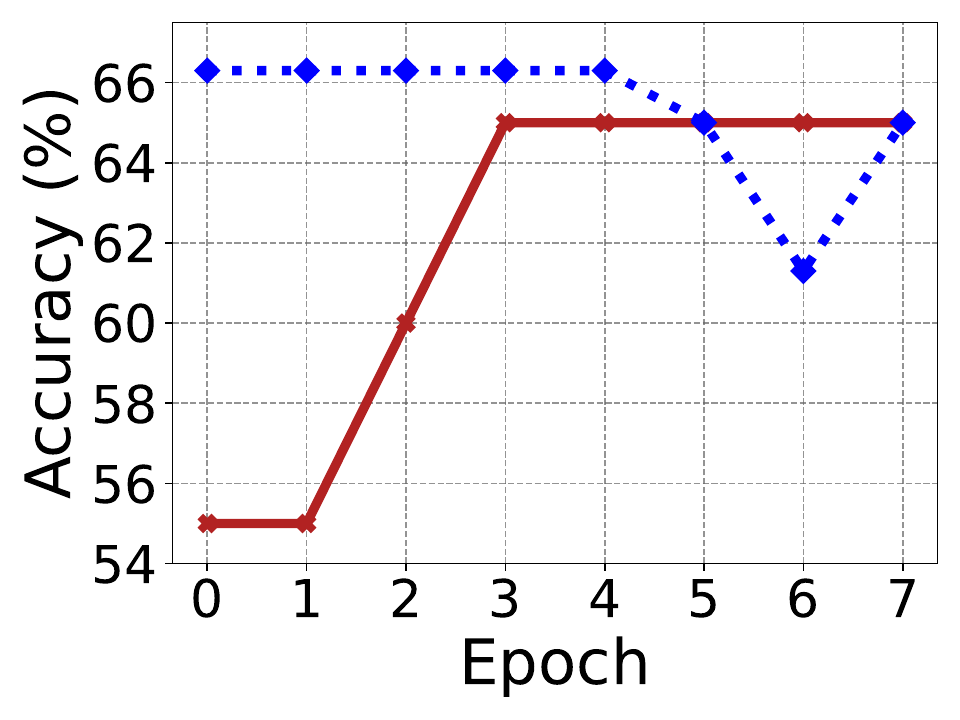} 
    \caption{Negative - Algebra}\label{learning_curve_b}
    \end{subfigure}%
    \caption{On the MATH dataset, we visualize the changes in train/test performance across epochs when training a GPT-4+ agent. For analysis purposes, we select one data type where the training does improve the test performance (Positive) and another that does not (Negative).}
    \label{learning_curve_ana} 
\end{figure}

For analysis purposes, we visualize the learning curve when training GPT-4+ agent in solving mathematical problems in Figure~\ref{learning_curve_ana}.
According to the types of experiment results, i.e., whether test performance improves (positive) or not (negative), we choose two data types, the only data type that failed to improve the test performance (Algebra) and one similar data type with the failed one that successfully obtained test performance improvement  (Intermediate Algebra).
Regarding the positive results on Intermediate Algebra in Figure~\ref{learning_curve_a}, 
we observe that when the optimization starts, the test performance is better than it was at the start time in most epochs, and the test performance is positively correlated with the training performance in general. These scenarios provide evidence to demonstrate the effectiveness of agent training. 
However, we also notice that the highest test performance is not at the last epoch where the algorithm terminates. 
To some extent, it represents GPT-4+ agents overfitting to the training set and suffering from a test performance drop while the training performance remains the same. 
Regarding the negative results on Algebra in Figure~\ref{learning_curve_b}, we get a similar observation that the test performance drops while the training performance remains the same. 
We also found the scenario that the test performance remains the same while training performance improves, indicating that sometimes the generated tool may be not general enough to be useful but would not harm the performance in solving tasks.

\subsubsection{Domain  Transferability}

\begin{figure}[H]
\label{fig:demo}
\begin{center}
\centerline{\includegraphics[width=0.59\columnwidth]{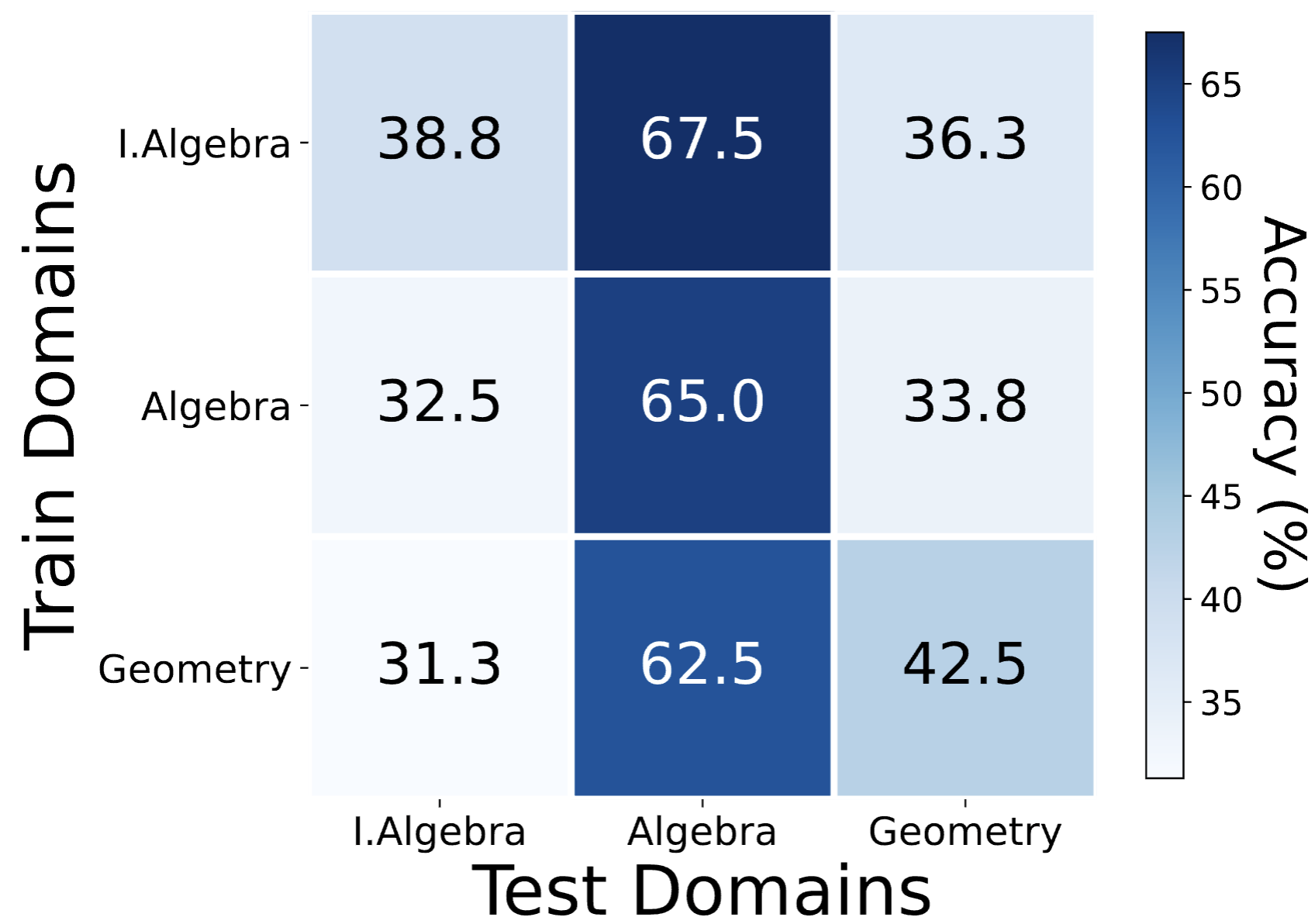}}
\caption{
To investigate the domain transferability of the agent training method, we show test performances of three different data types of the MATH dataset after training with different domains.
}
\label{fig_ood}
\end{center}
\vspace{-0.4in}
\end{figure}
We then investigate the generalization and transferability~\cite{zhou2022domain} of the agent training method when the test data and training data are not sampled from the same domain. 
We use three data types in MATH: algebra, intermediate algebra, and geometry.
We intend to choose two data types that have similar distributions (algebra and intermediate algebra) and another data type that should have the largest semantic distance with the algebra and intermediate algebra (geometry). 
We then train GPT-4+ agent on these three datasets crossly using different train-test pairs and show the test performance in Figure~\ref{fig_ood}.
We observe that in most cases (2 out of 3), when the training and test data come from the same domain, agent training leads to the best test performance compared with training using other domains, where the results are intuitive to us.
However, we observed an exception when testing on algebra. Using intermediate algebra for training led to better performance than using algebra (67.5\% vs. 65.0\%). This could be because intermediate algebra shares a similar distribution with algebra, and the more harder problems in intermediate algebra could be easier to learn basic and general functions that works for basic problems.
Another observation is that using geometry as the training domain leads to the worst test performance in algebra and intermediate algebra. This is because its distribution is far from both of the other two data types.

\subsubsection{Extend to Large Scale Training Data - Batch Training}
\label{sec:batch_train}

The proposed agent training method has one obvious bottleneck, which is that the training data size is limited to the context limit of the LLM-backed optimizer. This limitation prevents the full utilization of large-scale training data. A similar bottleneck occurs in traditional model training, where the constraint is from the GPU/CPU memory. To resolve this problem, traditional machine learning uses the concept of \emph{batch training}~\cite{masters2018revisiting}. This method divides the dataset into smaller subsets (batches) and trains the model iteratively on each batch to overcome the memory limitation.

Building on this practice, we propose a straightforward batch training method for our agent training flow. Specifically, we randomly sample one batch of training data within the LLM context limit at each training iteration from large-scale training data. Other procedures remain the same. 
We evaluate the Intermediate algebra of the MATH dataset on GPT-4+ agent system with 100 problems for training and 80 problems for testing where the test data is the same as it is in previous sections. 
We tried four different batch sizes (5, 10, 15, and 20), and set the epoch to 40, 20, 13, and 10, respectively, to ensure that the number of examples used for training is the same. We show the final test performance in Figure~\ref{batch_abla}.
The results show that large training data does not necessarily lead to test performance improvement in most cases, and only one case achieved a mirror improvement. Even when the batch size is set to 20, which is the same as the training data size in Figure~\ref{batch_abla}, the test performance drops by 7.8\%. This drop may be due to the frequent changing of training examples at each epoch, which prevents the AgentOptimizer from generating stable and effective functions.

\begin{table*}[t]
\centering 
\setlength{\tabcolsep}{17pt} 
\scalebox{0.85}{
\renewcommand{\arraystretch}{0.95} 
\begin{tabular}{c|c} 
\toprule[1.5pt]
\textbf{Tasks} & \textbf{Top Used Functions} \\ 
\midrule
\multirow{2}{*}{MATH} & evaluate\_expression, calculate\_polynomial\_roots, solve\_algebraic\_equation, calculate\_circumference \\
 & calculate\_polynomial\_roots, solve\_algebraic\_equation, calculate\_complex\_magnitude \\
\midrule
\multirow{1}{*}{GAIA} & scrape\_wikipedia\_table, extract\_pdf\_text, perform\_web\_search, fetch\_web\_content \\
\midrule
\multirow{2}{*}{TabMWP} & calculate\_total\_cost, analyze\_stem\_leaf\_plot, calculate\_basic\_statistics, perform\_table\_calculation\\  
& perform\_arithmetic\_operations, statistical\_analysis  \\
\bottomrule[1.5pt]
\end{tabular}
}
\vskip -0.09in
\caption{For illustration purposes, we list frequently used (during testing) functions generated by AgentOptimizer in different tasks.}
\label{tab_gen_tool}
\vspace{-0.2in}
\end{table*}

\begin{figure}[!t]
    \begin{center}
        \centerline{\includegraphics[width=0.7\columnwidth]{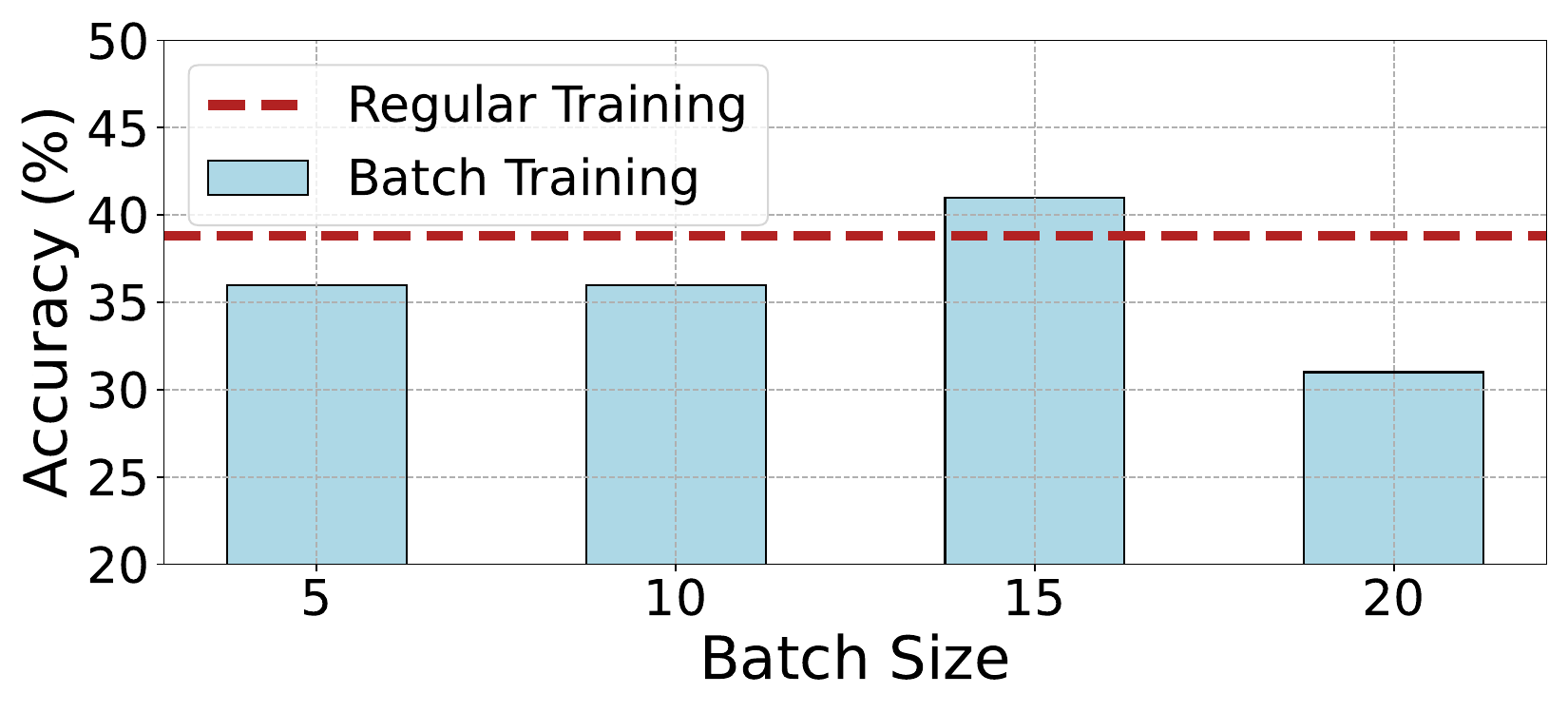}}
        \vspace{-0.2in}  
        \caption{The comparisons between the "regular training" of our method and the extended "batch training". 
        The batch training with an enlarged training set doesn't necessarily lead to better performance in different batch settings.
        }
        \label{batch_abla}
    \end{center}
    \vspace{-0.5in}
\end{figure}

\subsection{Agent Training v.s. Tool-Creation}
\label{Sec:tool}

Tool-creation algorithms ~\cite{cai2023large, qian2023creator} are to prompt LLMs to create tools that are tailored to specific tasks. 
Since the tool-creation procedure is a one-time process that does not include subsequent optimization mechanisms based on training performance, the design philosophy emphasizes that the created tools \emph{can be used} (without error), but not \emph{used effectively} (improve performance).

In this section, we compare the trained GPT-4+/ReAct agents and two latest tool-creation methods,  CREATOR~\cite{qian2023creator} and CRAFT~\cite{yuan2023craft}, on MATH and TabMWP datasets. 
For TabMWP, we follow the same experimental setting as Section~\ref{section_main_result}.
We choose these two datasets because the baseline codes on these two datasets are available and we can make a rigorous comparison.
To cover all data types of the MATH dataset, we randomly sample 20 examples for training and 80 examples for testing from all data types. 
As shown in Table~\ref{compare_with_tools}, after agent training, both the GPT-4+ agent and ReAct agent exhibit better performance compared with the tool-creation method, indicating agent training is a promising paradigm to distill function/tool from advanced large language models.

\subsection{Analysis of the Learned Functions}

We conducted an in-depth analysis of the generated functions. First, we present a list of frequently used functions generated for all datasets in Table~\ref{tab_gen_tool}. Then, we show the number of successful function calls at the second and end epochs (the functions may not be the same) during model training in Table~\ref{tab_tool_analysis}. We also present the widely adopted cyclomatic complexity~\cite{mccabe1994software} of the generated functions. We calculate the complexity using the Lizard Python library and present the average complexity of tools for each task when optimizing both GPT-4+ agent and ReAct agent.

Our observations indicate that the number of successful function calls exhibits significant improvement in most datasets, indicating that the optimized functions are becoming more effective compared to the initial list. Considering function complexity, a good function should have a complexity of no more than 10. A less complex function is less prone to trigger bugs. We observed that the created functions for the three tasks exhibit relatively low complexity, indicating that the functions are reliable.

\begin{table}[!t]
\setlength{\tabcolsep}{9pt} 
\renewcommand{\arraystretch}{0.85} 
\begin{tabular}{cccc}
\toprule[1.5pt]
Metrics                           & MATH        & GAIA        & TabMWP      \\ \hline
Second Epoch                & 11          & 8           & 19          \\
Last Epoch                  & \textbf{23} & \textbf{10} & \textbf{41} \\
Avg. Complexity &     1.2         &   3.7   &    5.0      \\ \bottomrule[1.5pt]
\end{tabular}
\vskip -0.09in
\caption{The number of successful function calls in the \textbf{second epoch} and the \textbf{last epoch} (functions may not be the same) of the agent training. We also show the cyclomatic complexity of the generated functions in the last row.
}
\label{tab_tool_analysis}
\vspace{-0.2in}
\end{table}
\section{Related Work}

There has been a growing volume of research focusing on employing LLMs to construct autonomous agents for reasoning, planning, and adapting to new observations in real-world tasks~\cite{xi2023rise, wang2023survey, hong2023metagpt, yao2022react, wu2023autogen, li2023camel, babyagi, park2023generative}. 
In such LLM agents, functions/tools/actions that LLM can leverage to interact with the environment or solve sub-tasks play a critical role, yet are often manually crafted~\cite{yao2022react}.
Recent works have explored automatic tool creation~\cite{cai2023large, qian2023creator,yuan2023craft}.
Specifically, Tool-maker~\cite{cai2023large} proposes to create tools through three demonstrations and then validates the created tool using three validation examples; CREATOR~\cite{qian2023creator} proposes to create tools exclusive for each query; 
And CRAFT~\cite{yuan2023craft} first creates customizable tools tailored for specific problems and then retrieves relevant tools for user query inference time.
In this work, we propose a conceptual framework that treats functions as learnable parameters in traditional AI models and develop a generic agent training paradigm to improve functions iteratively across epochs.
Different from prior works, our AgentOptimizer updates the function set based on the LLM agent's execution history of the whole training set, rather than making functions according to individual query-answer pair(s); this approach not only includes the specific LLM agent's behavior into consideration for function creation (in contrast to looking at the query-answer pair only), but also tends to make generic functions that work for the whole training set. By formulating an iterative optimization process, the AgentOptimizer can continuously update the functions based on the execution history of each epoch during training in a trial-and-error manner.

Sharing a similar goal of improving LLM agents, another line of work aims to enhance agent capability by modifying the underlying LLMs~\cite{patil2023gorilla,qin2023toolllm,zeng2023agenttuning}.
For instance, ToolLLM~\cite{qin2023toolllm} collects a massive amount of APIs to construct instruction data to finetune LLaMA~\cite{touvron2023llama} to obtain a new LLM optimized for using the collected APIs;
AgentTune~\cite{zeng2023agenttuning} proposes to enhance the
agent abilities through a hybrid instruction-tuning strategy to tune the LLMs parameters.
In contrast, we explore a new paradigm of training LLM agents without modifying the underlying LLM, which is particularly useful when the LLMs are online services and not available for tuning like GPT-4 or when tuning and maintaining a new LLM are expensive and time-consuming.

Besides, in this work, we leverage the exceptional capability of the LLM to build an optimizer (the AgentOptimizer) for training the agents, mimicking the numeric optimizers in model training such as SGD and Adam. Such an idea of using LLM as an optimizer has been proven effective by prior work~\cite{yang2023large, zhang2023using}. 
While these prior works mainly leverage LLM as an optimizer for optimization problems like prompt optimization~\cite{yang2023large} and hyperparameter optimization~\cite{zhang2023using}, our AgentOptimizer is particularly designed for the novel agent training paradigm and progressively update LLM agent's functions via multiple add, revise, and/or remove actions within each optimization step.

\section{Conclusion}

In this study, we propose a novel approach to train specialized LLM agents. 
The core idea is to draw an analogy between LLM agent training and traditional model training, where the learnable parameters in traditional models correspond to the operational functions of LLM agents, and the models' loss functions correspond to the historical performance metrics of the agents. 
Leveraging the impressive optimization capability of LLMs, we enhance the agents by updating the agent functions through the proposed AgentOptimizer. 
We evaluate the proposed method on multiple distinct tasks in training two typical agent systems and demonstrate that the agent training exhibits obvious performance improvement.

\clearpage
\section*{Impact Statements}
This paper presents research aimed at advancing the field of language agents. Our work has several potential societal consequences, both positive and negative, that we feel need to be highlighted.
On the positive side, language agents could be the core of many real-life applications~\cite{hosseini2023exploratory, cai2019hello}, and our work could greatly benefit these applications by enhancing the agents. For instance, it could be the core of an industrial robot~\cite{zeng2023large}, and our work could potentially enhance working efficiency.
On the negative side, the development of language agents raises the possibility of negative use of enhanced agents, such as using language agents to generate misinformation or harmful content~\cite{navigli2023biases} in social media for illegal purposes. Another concern is allowing language models to make changes in external environments~\cite{tian2023evil}. For instance, allowing language models to perform code execution in the computer may lead to unintended consequences~\cite{liu2024empirical}.

\bibliography{reference}
\bibliographystyle{icml2024}

\onecolumn
\etocdepthtag.toc{mtappendix}
\etocsettagdepth{mtchapter}{none}
\etocsettagdepth{mtappendix}{subsection}

\renewcommand{\contentsname}{Appendix}
\tableofcontents

\clearpage
\appendix

\section{Supplementary Theoretical Analysis}
In this section, we attempt to provide a theoretical analysis of the proposed agent training method. The objective is to provide an upper bound for the expected test loss difference between the trained agent function and the global optimal function. 
As an initial attempt, our analysis on the generalization bound of the agent training requires the following two strong assumptions. We leave the relaxation of these two assumptions to future work. 
\begin{assumption} \label{assumption_1}
In the agent training scenario, the training data $\mathcal{D}_{train}$ and test data $\mathcal{D}_{test}$ come from the same distribution $\mathbb{P}$, i.e., \(\mathcal{D}_{train}, \mathcal{D}_{test} \in \mathbb{P}.\)
\end{assumption}
In classical machine learning model training, it is a common practice to assume that the distribution of the training and test data are the same or similar, then use training loss as the primary metric for parameters selection.

\begin{assumption} \label{assumption_2}
Given training data $\mathcal{D}_{train}$, the proposed agent training method could identify the function set $\hat{\mathcal{F}}$ which achieves the smallest loss in $\mathcal{D}_{train}$ after agent training.
\begin{equation}
    \hat{\mathcal{F}}  = \argmin_{\mathcal{F}\subset\mathcal{V}}Loss(S_{\mathcal{F}}, \mathcal{D}_{train}).
\end{equation}
\end{assumption}

\begin{lemma} \label{lemma:difference_between_empirical_and_expected}
   Under Assumption~\ref{assumption_1}, for any agent system $S_{\mathcal{F}}$ with function set $\mathcal{F}$, with probability at least \(1-\delta\) (\(\delta \in (0,1)\)), we have:
   \[
   |Loss(S_{\mathcal{F}}, \mathcal{D}_{train}) - E[Loss(S_{\mathcal{F}}, \mathcal{D}_{test})] | \leq \sqrt{\frac{\beta \ln(1/\delta)}{2|\mathcal{D}_{train}|}},
   \]
   in which \(\beta\) represents the distance between the largest and the lowest loss value on any data instance. Specifically,  for any data instance $d \in \mathbb{P}$, $l_{\mathcal{S_{\mathcal{F}}}}(d) <\beta$, where $l_{S_{\mathcal{F}}}$ denotes the loss function, which measures the loss of each data instance for agent system  $S_{\mathcal{F}}$.
\end{lemma}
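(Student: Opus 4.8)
The plan is to recognize this statement as a direct instance of Hoeffding's inequality applied to the empirical average of per-instance losses. First I would fix the agent system $S_{\mathcal{F}}$ — the bound is asserted for each fixed function set $\mathcal{F}$, not uniformly over $\mathcal{V}$ — and write the training loss as an empirical mean
\[
Loss(S_{\mathcal{F}}, \mathcal{D}_{train}) = \frac{1}{n}\sum_{i=1}^{n} l_{S_{\mathcal{F}}}(d_i), \qquad n = |\mathcal{D}_{train}|,
\]
where the instances $d_1,\dots,d_n$ are drawn i.i.d. from $\mathbb{P}$. By Assumption~\ref{assumption_1} the test instances are drawn from the same distribution $\mathbb{P}$, so the expected test loss equals the common mean of each summand, $E[Loss(S_{\mathcal{F}}, \mathcal{D}_{test})] = E_{d \sim \mathbb{P}}[l_{S_{\mathcal{F}}}(d)] =: \mu$. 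Thus the quantity to control is exactly the deviation $|\frac{1}{n}\sum_i l_{S_{\mathcal{F}}}(d_i) - \mu|$ of an empirical mean of bounded i.i.d. random variables from its expectation.

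Next I would invoke the boundedness hypothesis: since $l_{S_{\mathcal{F}}}(d)$ is a failure loss confined to an interval of width $\beta$, the variables $X_i := l_{S_{\mathcal{F}}}(d_i)$ are independent and bounded, and Hoeffding's inequality yields, for any $t > 0$,
\[
P\!\left( \left| \frac{1}{n}\sum_{i=1}^{n} X_i - \mu \right| \geq t \right) \leq 2\exp\!\left( -\frac{2 n t^2}{\beta^2} \right).
\]
I would then set the right-hand side equal to $\delta$ and solve for $t$, and take the complement of the deviation event to obtain the $1-\delta$ high-probability guarantee. Reconciling the resulting threshold with the stated $\sqrt{\beta \ln(1/\delta)/(2n)}$ is routine bookkeeping: the failure loss satisfies $\beta \le 1$, so $\beta^2 \le \beta$, and the passage from the one-sided exponent to the stated form only loosens the inequality, leaving a valid upper bound.

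The one genuinely load-bearing point — and the step I would state most explicitly — is the \emph{independence} of the summands, which rests on two facts: the training instances being sampled i.i.d. from $\mathbb{P}$, and the function set $\mathcal{F}$ being fixed independently of $\mathcal{D}_{train}$. This is precisely why the lemma is phrased "for any agent system $S_{\mathcal{F}}$" and not for the trained $\hat{\mathcal{F}}$ of Assumption~\ref{assumption_2}: since $\hat{\mathcal{F}}$ is selected by minimizing the training loss, it is data-dependent, and the $X_i$ would no longer be independent of the chosen predictor. Any downstream theorem that needs to apply this concentration bound to $\hat{\mathcal{F}}$ must therefore first pass through a uniform-convergence argument — a union bound over the reachable function space, or a covering/complexity term — which is where the real difficulty of the generalization analysis would lie. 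For the lemma as stated, however, no such complexity control is needed, and the argument reduces cleanly to the single application of Hoeffding described above.
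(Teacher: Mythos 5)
Your proposal is correct and follows essentially the same route as the paper's proof: write the training loss as an empirical mean of i.i.d.\ bounded per-instance losses, identify the expected test loss with the population mean $E_{d\sim\mathbb{P}}[l_{S_{\mathcal{F}}}(d)]$ via Assumption~\ref{assumption_1}, apply two-sided Hoeffding, and invert for $\epsilon$. If anything you are more careful than the paper, which plugs $\beta$ into Hoeffding where the standard statement uses the squared range $\beta^{2}$ (and whose lemma statement says $\ln(1/\delta)$ while its proof concludes $\ln(2/\delta)$); your observations that $\beta\le 1$ rescues the bound and that the concentration holds only for a function set $\mathcal{F}$ fixed independently of $\mathcal{D}_{train}$ are both correct and worth keeping.
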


\begin{proof}[Proof of Lemma~\ref{lemma:difference_between_empirical_and_expected}]

For any training data set $\mathcal{D}_{train}$ and potential test data set $\mathcal{D}_{test}$ from the data distribution \(\mathbb{P}\), we have

\begin{equation}
|Loss(S_{\mathcal{F}}, \mathcal{D}_{train}) - E[Loss(S_{\mathcal{F}}, \mathcal{D}_{test})] | = |\frac{1}{|\mathcal{D}_{train}|}\sum_{i=1}^{|\mathcal{D}_{train}|}l_{S_{\mathcal{F}}}(d_{i}) - E_{d \sim \mathbb{P}}[l_{S_{\mathcal{F}}}(d)]|.
\end{equation}  

According to Hoeffding’s inequality~\cite{hoeffding1994probability}, we have:
\begin{align}
\label{bound_sup_0}
P(|Loss(S_{\mathcal{F}}, \mathcal{D}_{train}) - E[Loss(S_{\mathcal{F}}, \mathcal{D}_{test})] |> \epsilon ) &= P(|\frac{1}{|\mathcal{D}_{train}|}\sum_{i=1}^{|\mathcal{D}_{train}|}l_{S_{\mathcal{F}}}(d_{i}) - E_{d \sim \mathbb{P}}[l_{S_{\mathcal{F}}}(d)]| > \epsilon) \\ \nonumber
& \leq 2\exp{\frac{-2|\mathcal{D}_{train}| \epsilon^2}{\frac{1}{|\mathcal{D}_{train}|}\sum_{i=1}^{|\mathcal{D}_{train}|} \beta}} = 2\exp{\frac{-2|\mathcal{D}_{train}| \epsilon^2}{\beta}}. 
\end{align}

Then with probability at least \(1- 2\exp{\frac{-2|\mathcal{D}_{train}| \epsilon^2}{\beta}}\), we have:

\begin{equation}\label{lemma1_1}
|Loss(S_{\mathcal{F}}, \mathcal{D}_{train}) - E[Loss(S_{\mathcal{F}}, \mathcal{D}_{test})] | \leq  \epsilon .
\end{equation}  

Taking $\delta = 2\exp{\frac{-2|\mathcal{D}_{train}| \epsilon^2}{\beta}}$, we have:

\begin{equation}\label{lemma1_2}
\epsilon = \sqrt{\frac{\beta \ln (2/\delta) }{2|\mathcal{D}_{train}|}}
\end{equation}  

Combining Equation~\ref{lemma1_1} and Equation~\ref{lemma1_2}, with probability at least $1-\delta$, we have:
\begin{equation}
|Loss(S_{\mathcal{F}}, \mathcal{D}_{train}) - E[Loss(S_{\mathcal{F}}, \mathcal{D}_{test})] | \leq  \sqrt{\frac{\beta \ln (2/\delta) }{2|\mathcal{D}_{train}|}}.
\end{equation}
Which completes the proof.
\end{proof}

\begin{theorem}\label{theorem:diff_expfunc_optimalfunc}
Under Assumption~\ref{assumption_1} and Assumption~\ref{assumption_2}, with probability at least \(1-\delta\) (\(\delta \in (0,1)\)), the trained agent system $S_{\mathcal{\hat{F}}}$ with trained functionl list $\mathcal{\hat{F}}$ satisfies:
\begin{equation}
E[Loss(S_{\mathcal{\hat{F}}}, \mathcal{D}_{test})] - E[Loss(S_{\mathcal{F^*}}, \mathcal{D}_{test})] \leq 2\sqrt{\frac{\beta \ln (2/\delta) }{2|\mathcal{D}_{train}|}},
\end{equation}
\end{theorem}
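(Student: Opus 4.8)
The plan is to combine Lemma~\ref{lemma:difference_between_empirical_and_expected} (which controls the gap between empirical training loss and expected test loss for a \emph{single} fixed function set) with the optimality property of $\hat{\mathcal{F}}$ granted by Assumption~\ref{assumption_2}. The target quantity $E[Loss(S_{\hat{\mathcal{F}}}, \mathcal{D}_{test})] - E[Loss(S_{\mathcal{F}^*}, \mathcal{D}_{test})]$ is a gap between two \emph{expected test} losses, so the strategy is to insert the two corresponding \emph{empirical training} losses as intermediate terms and use the lemma to bound each side, while using Assumption~\ref{assumption_2} to compare the empirical losses directly.

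First I would apply Lemma~\ref{lemma:difference_between_empirical_and_expected} twice, once to the trained set $\hat{\mathcal{F}}$ and once to the global optimum $\mathcal{F}^*$. This gives, with high probability,
\[
E[Loss(S_{\hat{\mathcal{F}}}, \mathcal{D}_{test})] \le Loss(S_{\hat{\mathcal{F}}}, \mathcal{D}_{train}) + \sqrt{\tfrac{\beta \ln(2/\delta)}{2|\mathcal{D}_{train}|}},
\]
and symmetrically
\[
Loss(S_{\mathcal{F}^*}, \mathcal{D}_{train}) \le E[Loss(S_{\mathcal{F}^*}, \mathcal{D}_{test})] + \sqrt{\tfrac{\beta \ln(2/\delta)}{2|\mathcal{D}_{train}|}}.
\]
Next I would use Assumption~\ref{assumption_2}, which states that $\hat{\mathcal{F}}$ minimizes the empirical training loss over all $\mathcal{F}\subset\mathcal{V}$; since $\mathcal{F}^*$ is a particular element of that set, we get $Loss(S_{\hat{\mathcal{F}}}, \mathcal{D}_{train}) \le Loss(S_{\mathcal{F}^*}, \mathcal{D}_{train})$. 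Chaining these three inequalities collapses the empirical terms and yields the two-sided penalty $2\sqrt{\beta \ln(2/\delta) / (2|\mathcal{D}_{train}|)}$, which is the claimed bound.

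The one technical subtlety I would watch is the probabilistic bookkeeping: each invocation of the lemma holds with probability at least $1-\delta$, so naively the two applications hold jointly only with probability $1-2\delta$ by a union bound. To land exactly the stated $1-\delta$ constant, I would either apply the lemma with confidence parameter $\delta/2$ in each instance (absorbing the factor into the $\ln(2/\delta)$ term, which the stated bound already carries) or note that the same draw of $\mathcal{D}_{train}$ simultaneously controls both deviations. This is the main place where a careless argument would be off by a constant, so I expect the chief obstacle to be managing the failure probabilities cleanly rather than the algebra of the chaining itself. Everything else is a direct substitution once the three inequalities are in hand.
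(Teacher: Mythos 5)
Your proposal follows essentially the same route as the paper's proof: apply Lemma~\ref{lemma:difference_between_empirical_and_expected} to both $\hat{\mathcal{F}}$ and $\mathcal{F}^*$, use Assumption~\ref{assumption_2} to compare the two empirical training losses, and chain the three inequalities to obtain the $2\sqrt{\beta\ln(2/\delta)/(2|\mathcal{D}_{train}|)}$ bound. The union-bound subtlety you flag is real and is in fact glossed over by the paper, which simply asserts the combined statement holds with probability $1-\delta$ without adjusting the confidence parameter for the two invocations of the lemma.
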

where $\mathcal{F^*}$ denotes the optimal function in the function space $\mathcal{V}$, i.e., $\mathcal{F^{*}} = \argmin_{\mathcal{F}\subset\mathcal{V}}E[Loss(S_{\mathcal{F}}, \mathcal{D}_{test})]$.

\begin{proof}[Proof of Theorem~\ref{theorem:diff_expfunc_optimalfunc}]

Taking $\mathcal{\hat{F}}$ into Lemma~\ref{lemma:difference_between_empirical_and_expected}, with probability at least \(1-\delta\) (\(\delta \in (0,1)\)), we have:
\begin{equation} \label{inte:2-1}
|Loss(S_{\mathcal{\hat{F}}}, \mathcal{D}_{train}) - E[Loss(S_{\mathcal{\hat{F}}}, \mathcal{D}_{test})]| \leq \sqrt{\frac{\beta \ln (2/\delta) }{2|\mathcal{D}_{train}|}}.
\end{equation}
Considering $\hat{\mathcal{F}} = \argmin_{\mathcal{F}\subset\mathcal{V}}Loss(S_{\mathcal{F}}, \mathcal{D}_{train})$, we have:
\begin{equation} \label{inte:2-2}
Loss(S_{\mathcal{\hat{F}}}, \mathcal{D}_{train}) < Loss(S_{\mathcal{F^*}}, \mathcal{D}_{train}).
\end{equation}
Combing Equation~\ref{inte:2-1} and Equation~\ref{inte:2-2}, we have:
\begin{equation} \label{inte:2-3}
E[Loss(S_{\mathcal{\hat{F}}}, \mathcal{D}_{test})]  \leq Loss(S_{\mathcal{F^*}}, \mathcal{D}_{train}) + \sqrt{\frac{\beta \ln (2/\delta) }{2|\mathcal{D}_{train}|}}.
\end{equation}
Taking $S_{\mathcal{F^*}}$ into Lemma~\ref{lemma:difference_between_empirical_and_expected}, we have:
\begin{equation}\label{inte:2-4}
|Loss(S_{\mathcal{F^*}}, \mathcal{D}_{train}) -E[Loss(S_{\mathcal{F^*}}, \mathcal{D}_{test})] | \leq     \sqrt{\frac{\beta \ln (2/\delta) }{2|\mathcal{D}_{train}|}}.
\end{equation}

Combining Equation~\ref{inte:2-3} and Equation~\ref{inte:2-4}, with probability at least $1-\delta$, we have:
\begin{equation} \label{inte:2-4}
E[Loss(S_{\mathcal{\hat{F}}}, \mathcal{D}_{test})] - E[Loss(S_{\mathcal{F^*}}, \mathcal{D}_{test})] \leq 2*\sqrt{\frac{\beta \ln (2/\delta) }{2|\mathcal{D}_{train}|}},
\end{equation}
which completes the proof.
\end{proof}

Theorem~\ref{theorem:diff_expfunc_optimalfunc} provides an upper bound on the expected test loss difference between the trained agent function $\mathcal{\hat{F}}$ and the global optimal function $\mathcal{F^*}$. We observe from Equation~\ref{inte:2-4} that a larger training set could lead to a narrower upper bound. However, the training set is limited by the LLM’s context limit. This limitation inspires us to investigate a better way of extending the training dataset, rather than relying on the straightforward batch training approach described in Section \ref{sec:batch_train}.

\section{Supplementary Experimental Results}

\subsection{Evaluations on Other Language Models}

\begin{table*}[htb]
\centering
\setlength{\tabcolsep}{10pt} 
\begin{tabular}{cccc}
\toprule[1.5pt]
               & Code-Llama-34B & Mixtral-8x7B & GPT-3.5-turbo-1106  \\ \hline
Before Training & 7.5\%   &  23.8\%  &  25.0\%                                           \\ \hline
After Training & \textbf{11.3\%}     & \textbf{28.8\%} &  \textbf{28.8\%}                               \\ \bottomrule[1.5pt]
\end{tabular}
\caption{
The performance of agents backed by other language models is evaluated before and after agent training on the MATH dataset. The results indicate that agent training still leads to significant performance improvements.
}
\label{diff_models}
\end{table*}

In this section, we conducted experiments to evaluate the performance of agents backed by various language models after agent training, including GPT-3.5-turbo-1106~\cite{openai2022gpt}, and open-source models Mixtral-8x7B~\cite{jiang2024mixtral,jiang2023mistral} and Code-Llama-34B~\cite{roziere2023code,jayaseelanllama}. The LLM that backed the AgentOptimizer was GPT-4-1106-preview.

We performed experiments on the MATH dataset using the same settings as described in Section \ref{Sec:tool}. The results are presented in Table~\ref{diff_models}.
Our findings indicate that agent training leads to better performance on all three models, demonstrating that agent training is agnostic to the LLMs that backed the agent.

\subsection{More Experimental Results after Removing Roll-back \& Early-stop}

\begin{figure}[htb]
    \begin{subfigure}{0.38\linewidth}
    \centering
    \includegraphics[width=\linewidth]{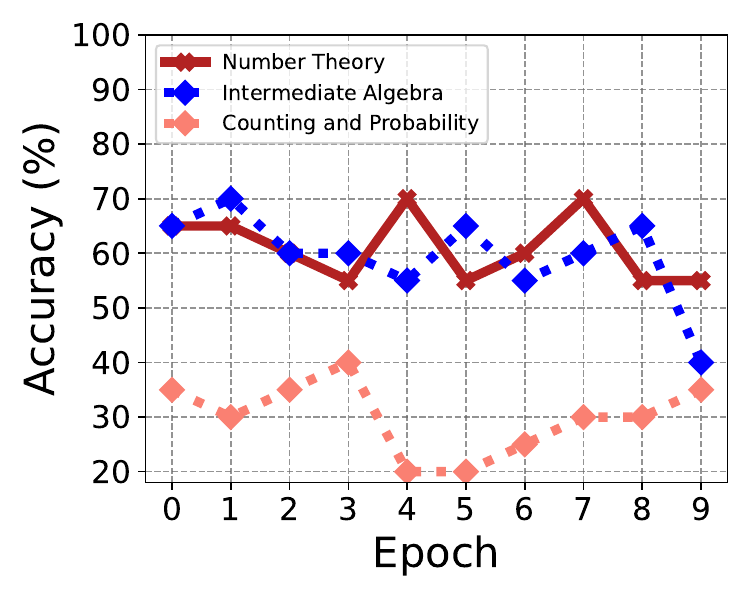}
    \centering
    \caption{Training performance w/o roll-back \& early-stop} \label{abla_rollback_more_1}
    \end{subfigure}%
    \centering
    \begin{subfigure}{0.38\linewidth}
    \centering
    \includegraphics[width=\linewidth]{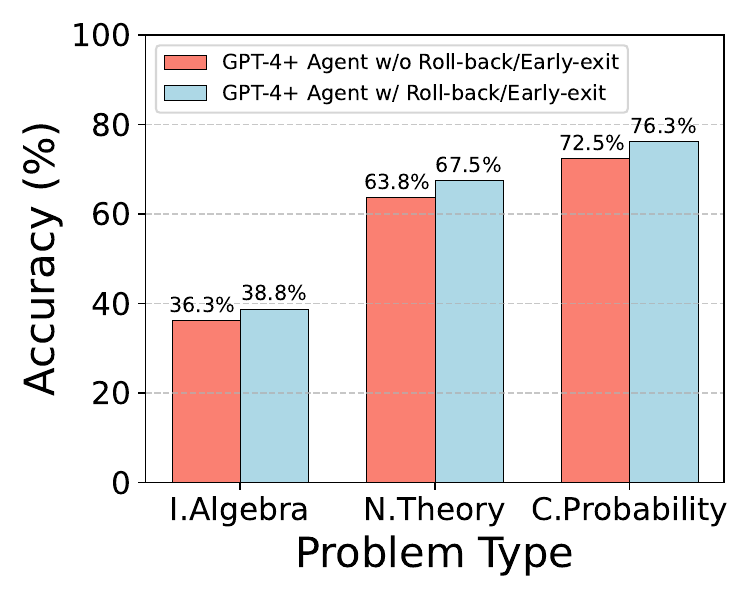} 
    \centering
    \caption{Test performance w/o roll-back \& early-stop}
    \label{abla_rollback_more_2}
    \end{subfigure}%
    \caption{After removing the roll-back and early-exit mechanisms, the learning curve of the training performance and the final test performance of GPT-4+ Agents.}
    \label{abla_rollback_more} 
\end{figure}

We present additional experimental results in Figure~\ref{abla_rollback_more} after removing roll-back and early-stop. Specifically, we further illustrate the training performance curve in Figure~\ref{abla_rollback_more_1}. We observe that the training performance fluctuated with the number of training epochs, indicating that the learned functions are not stable and may not necessarily lead to improved training performance at each epoch. This unstable function optimization leads to a drop in test performance, as shown in Figure~\ref{abla_rollback_more_2}.

\section{Supplementary Analysis of Agent Training versus Model Training}

\begin{table*}[!htb]
\centering
\setlength{\tabcolsep}{10pt} 
\begin{tabular}{ccccc}
\toprule[1.5pt]
               & Optimizer & Target & Human Interpretable & Access to Model/LLM Weights \\ \hline
Model Training & SGD etc.   &  Model Weights  &    \xmark                  &  \cmark                           \\ \hline
Agent Training & LLMs     & Functions &        \cmark              & \xmark                        \\ \bottomrule[1.5pt]
\end{tabular}
\caption{
Comparing Model Training and Agent Training: Model training relies on an optimizer such as SGD. It is not human-interpretable and requires access to model parameters. In contrast, agent training uses LLMs as the optimizer, which is interpretable in natural language and generated functions. Furthermore, agent training does not require access to model parameters.
}
\label{compare}
\end{table*}

Table~\ref{compare} summarizes the differences between these two training paradigms. Although both paradigms have a similar workflow of improving from training data leveraging their optimizers, they have different features. Specifically, the optimizers in traditional model training are gradient descent optimization algorithms, which update the model parameters in the opposite direction of the gradient of the loss function. However, the complex parameters updating logic is not interpretable to humans, and model training requires accessible parameters.
In contrast, the optimizers in agent training are LLMs, which prompt the update of agent functions using natural language at each optimization step. The optimization is interpretable to humans (functions and natural language), and it doesn’t require accessible parameters.

\section{Implementations Details}

\subsection{Prompt Design for AgentOptimizer}

You are a function optimizer. Your task is to maintain a list of functions for the assistant according to the existing function set and conversation history that happens between the assistant and the user. 

You can perform one of the following four actions to manipulate the function set using the functions you have: 

1. Revise one existing function (using revise\_function). 
2. Remove one existing function (using remove\_function).
3. Add one new function (using add\_function).
4. Directly return "TERMINATE" to me if no more actions are needed for the current function set.

Below are the principles that you need to follow for taking these four actions.

(1) Revise one existing function:
1. Pay more attention to the failed tasks and corresponding error information, and optimize the function used in these tasks according to the conversation history if needed.
2. A failed function call can occur due to incorrect input arguments (missing arguments) or an incorrect function code implementation. You should focus more on the function code implementation and make it easy to get success function call.
3. Do not revise the function that you think works well and plays a critical role in solving the problems according to the conversation history. Only making revisions if needed.
4. Sometimes, a NameError may occur. To fix this error, you can either revise the name of the function in the code implementation or revise the name of the function call to make these two names consistent.

(2) Remove one existing function:
1. Only remove the function that you think is not needed anymore in future tasks.

(3) Add one new function:
1. The added function should be general enough to be used in future tasks. For instance, if you encounter a problem that this function can solve, or one step of it, you can use the generated function directly instead of starting from scratch 
2. The added new function should solve a higher-level question that encompasses the original query and extend the code's functionality to make it more versatile and widely applicable.  
3. Replace specific strings or variable names with general variables to enhance the tool's applicability to various queries. All names used inside the function should be passed in as arguments. 
Below is an example of a function that potentially deserves to be added, which can be used to solve a higher-level question:

\begin{lstlisting}[language={python}]
{{
    "name": "evaluate_expression",
    "description": "Evaluate arithmetic or mathematical expressions provided as strings.",
    "arguments": {{
        "expression": {{
            "type": "string",
            "description": "The mathematical expression to evaluate."
        }}
    }},
    "packages": "sympy",
    "code": "from sympy import sympify, SympifyError\n\n def evaluate_expression(expression):\n    try:\n        result = sympify(expression)\n        if result.is_number:\n            result = float(result)\n        else:\n            result = str(result)\n        return result\n    except SympifyError as e:\n        return str(e)"
}}
\end{lstlisting}
(4) Directly return "TERMINATE":
If you think there is no need to perform any other actions for the current function set since the current list is optimal more actions will harm the performance in future tasks. Please directly reply to me with "TERMINATE".

One function signature includes the following five elements:
1. Function name 
2. Function description 
3. JSON schema of arguments encoded as a string
4. A list of package names imported by the function packages 
5. The code implementation

Below are the signatures of the current functions.

List A: \textcolor{burntorange}{{\{current\_function\_signature\}}}

The success rate (performance) with this function set is \textcolor{burntorange}{{\{success\_rate\}}}.
The following list are the function signatures that you have after taking \textcolor{burntorange}{{\{actions\_num\}}} actions in our previous conversations.

List B: \textcolor{burntorange}{{\{updated\_function\_signature\}}}.

We also provide more examples for different functions and their corresponding success rates. The following function signatures are arranged in are arranged in ascending order based on their success rates, where higher success rates indicate better quality.

\textcolor{burntorange}{{\{historical\_fail\_functions\}}}

Here are\textcolor{burntorange}{{ \{conversation\_num\}}} conversation histories of solving \textcolor{burntorange}{{\{conversation\_num\}}} tasks.

History:
\textcolor{burntorange}{{\{history\}}}

The following table shows the statistical information for solving each task in each conversation and indicates whether each task was successfully solved. 
1 represents correct. 0 represents wrong.

statistic: 
\textcolor{burntorange}{{\{statistic\}}}

According to the information I provide, please take one of four actions to manipulate list B using the functions you know. 
Instead of returning TERMINATE directly or taking no action, you should try your best to optimize the function set. Only take no action if you really think the current list is optimal, as more actions will harm performance in future tasks. 
Even adding a general function that can substitute the assistant’s repeated suggestions of Python code with the same functionality could also be helpful.

\subsection{Prompt Design for ReAct}

Answer the following question using your coding skills. Below is a list of the tools you can use and their detailed descriptions:

\textcolor{burntorange}{{\{tool\_descriptions\}}}

You should always follow the below template, when you respond you should provide one (Thought, Action, Action Input) triplet and wait for observation before proceeding to the next round, unless you have reached a FINAL ANSWER.

YOUR FINAL ANSWER should be a number OR as few words as possible OR a comma separated list of numbers and/or strings.

If you are asked for a number, don't use comma to write your number neither use units such as \$ or percent sign unless specified otherwise.
If you are asked for a string, don't use articles, neither abbreviations (e.g. for cities), and write the digits in plain text unless specified otherwise.
If you are asked for a comma separated list, apply the above rules depending of whether the element to be put in the list is a number or a string.

TEMPLATE:

Question: the input question you must answer

Thought: your reasoning about the current situation

Action 1: the action to take, should be one of \textcolor{burntorange}{{[\{tool\_names\}]}}

Action 1 Input: the arguments passed to action 1

Observation 1: the result of action 1

Action 2: the action to take, should be one of \textcolor{burntorange}{{[\{tool\_names\}]}}

Action 2 Input: the input to action 2

... (this Thought/Action/Action Input/Observation can repeat N times)

Thought: I now know the final answer

FINAL ANSWER: the final answer to the original input question

\subsection{Function calls of LLM backed AgentOptimizer}
\label{optimizer-actions}

\paragraph{Add\_function:} add a new function that may be used in future tasks.

\begin{lstlisting}[language=Python]
    ADD_FUNC = {
        "type": "function",
        "function": {
            "name": "add_function",
            "description": "Add a function in the context of the conversation. Necessary Python packages must be declared. The name of the function MUST be the same with the function name in the code you generated.",
            "parameters": {
                "type": "object",
                "properties": {
                    "name": {
                        "type": "string",
                        "description": "The name of the function in the code implementation."
                    },
                    "description": {
                        "type": "string",
                        "description": "A short description of the function."
                    },
                    "arguments": {
                        "type": "string",
                        "description": "JSON schema of arguments encoded as a string. Please note that the JSON schema only supports specific types including string, integer, object, array, boolean. (do not have float type) For example: { \"url\": { \"type\": \"string\", \"description\": \"The URL\", }}. Please avoid the error 'array schema missing items' when using array type."
                    },
                    "packages": {
                        "type": "string",
                        "description": "A list of package names imported by the function, and that need to be installed with pip prior to invoking the function. This solves ModuleNotFoundError. It should be string, not list."
                    },
                    "code": {
                        "type": "string",
                        "description": "The implementation in Python. Do not include the function declaration."
                    }
                },
                "required": ["name", "description", "arguments", "packages", "code"]
            }
        }
    }

\end{lstlisting}

\paragraph{Revise\_function:} revise one existing function.

\begin{lstlisting}[language=Python]
    REVISE_FUNC = {
        "type": "function",
        "function": {
            "name": "revise_function",
            "description": "Revise a function in the context of the conversation. Necessary Python packages must be declared. The name of the function MUST be the same with the function name in the code you generated.",
            "parameters": {
                "type": "object",
                "properties": {
                    "name": {
                        "type": "string",
                        "description": "The name of the function in the code implementation."
                    },
                    "description": {
                        "type": "string",
                        "description": "A short description of the function."
                    },
                    "arguments": {
                        "type": "string",
                        "description": "JSON schema of arguments encoded as a string. Please note that the JSON schema only supports specific types including string, integer, object, array, boolean. (do not have float type) For example: { \"url\": { \"type\": \"string\", \"description\": \"The URL\", }}. Please avoid the error 'array schema missing items' when using array type."
                    },
                    "packages": {
                        "type": "string",
                        "description": "A list of package names imported by the function, and that need to be installed with pip prior to invoking the function. This solves ModuleNotFoundError. It should be string, not list."
                    },
                    "code": {
                        "type": "string",
                        "description": "The implementation in Python. Do not include the function declaration."
                    }
                },
                "required": ["name", "description", "arguments", "packages", "code"]
            }
        }
    }
\end{lstlisting}

\paragraph{Remove\_function:} remove one existing function.

\begin{lstlisting}[language=Python]
    REMOVE_FUNC = {
        "type": "function",
        "function": {
            "name": "remove_function",
            "description": "Remove one function in the context of the conversation. Once remove one function, the assistant will not use this function in future conversation.",
            "parameters": {
                "type": "object",
                "properties": {
                    "name": {
                        "type": "string",
                        "description": "The name of the function in the code implementation."
                    }
                },
                "required": ["name"]
            }
        }
    }
\end{lstlisting}

\section{Generated Functions}
Here we provide a list of typical function examples generated by AgentOptimizer for all three datasets.

\subsection{Trained Functions in MATH}

\textbf{GPT-4+ Agent}

1. evaluate\_expression:
evaluate arithmetic or mathematical expressions provided as strings.

\begin{lstlisting}[language=Python]
from sympy import sympify, SympifyError

def evaluate_expression(expression):
    try:
        result = sympify(expression)        
        if result.is_number:            
            result = float(result)        
        else:            
            result = str(result)        
            return result    
        except SympifyError as e:        
            return str(e)
\end{lstlisting}

2. calculate\_polynomial\_roots:
find all real roots of a given polynomial.

\begin{lstlisting}[language=Python]
from sympy import solve, sympify, SympifyError

def calculate_polynomial_roots(polynomial):    
    try:        
        # Parse the polynomial and solve for its roots        
        roots = solve(sympify(polynomial), dict=True)        
        # Convert roots to a standard list format        
        roots_list = [str(root) for root in roots]\n        
        return roots_list    
    except SympifyError as e:        
        return str(e)
\end{lstlisting}

3. solve\_algebraic\_equation:
revises the existing function that solves a single variable algebraic equation to handle equations more robustly with variable terms on both sides.
\begin{lstlisting}[language=Python]
from sympy import symbols, Eq, solve, parse_expr

def solve_algebraic_equation(equation, variable):
    # Create a symbolic variable
    symbol = symbols(variable)
    # Parse the equation string into a sympy expression
    left_part, right_part = equation.split('=')
    eq = Eq(parse_expr(left_part), parse_expr(right_part))
    # Solve the equation for the variable
    solution = solve(eq, symbol)
    # Return the solution
    return solution
\end{lstlisting}

\textbf{ReAct Agent}

1. convert\_decimal\_to\_fraction:
convert a decimal number to its equivalent in the simplest fractional form.

\begin{lstlisting}[language=Python]
from fractions import Fraction

def convert_decimal_to_fraction(decimal_number):
    try:
        # Convert string to a floating point number
        decimal_number = float(decimal_number)
        # Create a Fraction from the decimal number
        fraction_result = Fraction(decimal_number).limit_denominator()
        # Return the fraction as a string in the form 'numerator/denominator'
        return str(fraction_result)
    except ValueError as e:
        return str(e)
\end{lstlisting}

2. evaluate\_math\_expression:
evaluate a wide range of mathematical expressions provided as strings, including basic arithmetic, factorial, combinations, and permutations.
\begin{lstlisting}[language=Python]
from sympy import sympify, factorial, binomial

def evaluate_math_expression(expression):
    try:
        # Extend the namespace with factorial and binomial functions
        local_dict = {'factorial': factorial, 'comb': binomial}
        # Evaluate the expression using sympy's sympify function
        result = sympify(expression, locals=local_dict)
        if result.is_number:
        return float(result)
        else:
        return str(result)
    except Exception as e:
        return str(e)
\end{lstlisting}

3. get\_polynomial\_degree:
given a polynomial expression as a string, return the degree of the polynomial.
\begin{lstlisting}[language=Python]
from sympy import Poly, SympifyError

def get_polynomial_degree(expression):
try:
    # Convert the string expression into a polynomial
    poly = Poly(expression)
    # Return the degree of the polynomial
    return poly.degree()
except SympifyError as e:
    return str(e)
\end{lstlisting}

\subsection{Trained Functions in GAIA}

\textbf{GPT-4+ Agent}

1. perform\_web\_search: performs a web search using Bing Search API and returns the top search results including URLs and snippets.

\begin{lstlisting}[language=Python]
import os
import requests

def perform_web_search(query):
    subscription_key = os.environ['BING_SEARCH_V7_SUBSCRIPTION_KEY']
    endpoint = os.environ['BING_SEARCH_V7_ENDPOINT'] + '/v7.0/search'
    headers = {'Ocp-Apim-Subscription-Key': subscription_key}
    params = {'q': query, 'textDecorations': True, 'textFormat': 'HTML'}
    response = requests.get(endpoint, headers=headers, params=params)
    response.raise_for_status()
    search_results = response.json()
    top_results = [{'url': result['url'], 'snippet': result['snippet']} for result in search_results.get('webPages', {}).get('value', [])]
    return top_results
    
\end{lstlisting}

2. scrape\_wikipedia\_table:
scrapes data from a table on a Wikipedia page based on a header keyword.
\begin{lstlisting}[language=Python]
import requests
from bs4 import BeautifulSoup

def scrape_wikipedia_table(url, header_keyword):
    response = requests.get(url)
    response.raise_for_status()
    soup = BeautifulSoup(response.content, 'html.parser')
    headers = soup.find_all(['h1', 'h2', 'h3', 'h4', 'h5', 'h6'])
    data = []
    for header in headers:
        if header_keyword.lower() in header.text.lower():
            table = header.find_next_sibling('table', class_='wikitable')
            if table:
                rows = table.find_all('tr')
                for row in rows:
                    cols = row.find_all(['th', 'td'])
                    cols = [ele.text.strip() for ele in cols]
                    data.append([ele for ele in cols if ele])
                break
    return data
\end{lstlisting}

3. extract\_pdf\_text:
extracts text from a PDF file.
\begin{lstlisting}[language=Python]
import fitz  # PyMuPDF

def extract_pdf_text(file_path):
    # Open the PDF file
    with fitz.open(file_path) as pdf:
        text = ''
        # Iterate over each page
        for page_num in range(len(pdf)):
            page = pdf[page_num]
            text += page.get_text()
        return text
\end{lstlisting}

\textbf{React Agent}

1. fetch\_webpage\_content: retrieve the HTML content of a given webpage URL.

\begin{lstlisting}[language=Python]
import requests

def fetch_webpage_content(url):
    response = requests.get(url)
    response.raise_for_status()
    return response.text

\end{lstlisting}

2. fetch\_bing\_search\_results:
retrieve search results from Bing Web Search API.

\begin{lstlisting}[language=Python]
import os
import requests

def fetch_bing_search_results(query):
    subscription_key = os.environ['BING_SEARCH_V7_SUBSCRIPTION_KEY']
    endpoint = os.environ['BING_SEARCH_V7_ENDPOINT'] + "/v7.0/search"
    
    headers = {'Ocp-Apim-Subscription-Key': subscription_key}
    params = {'q': query, 'textDecorations': True, 'textFormat': 'HTML'}
    
    response = requests.get(endpoint, headers=headers, params=params)
    response.raise_for_status()
    return response.json()
\end{lstlisting}

3. extract\_text\_from\_pdf:
extracts all text from a given PDF file.

\begin{lstlisting}[language=Python]
import fitz  # PyMuPDF

def extract_text_from_pdf(file_path):
    try:
        # Open the PDF file
        with fitz.open(file_path) as pdf:
            text = ''
            # Extract text from each page in the PDF
            for page in pdf:
                text += page.get_text()
            return text
    except Exception as e:
        return f'An error occurred: {str(e)}'
\end{lstlisting}

\subsection{Trained Functions in TabMWP}

\textbf{GPT-4+ Agent}

1. perform\_arithmetic\_operations: perform basic arithmetic operations such as sum, average, maximum, minimum, difference, and rate of change on a given list of numbers.

\begin{lstlisting}[language=Python]
def perform_arithmetic_operations(numbers, operation):
    result = None
    if operation == 'sum':
        result = sum(numbers)
    elif operation == 'avg':
        result = sum(numbers) / len(numbers) if numbers else None
    elif operation == 'max':
        result = max(numbers) if numbers else None
    elif operation == 'min':
        result = min(numbers) if numbers else None
    elif operation == 'diff' and len(numbers) > 1:
        result = numbers[0] - numbers[1]
    elif operation == 'rate_of_change' and len(numbers) > 1 and numbers[1] != 0:
        result = ((numbers[0] - numbers[1]) / abs(numbers[1])) * 100
    return result
\end{lstlisting}

2. analyze\_stem\_leaf\_plot
Analyze a given stem-leaf plot to calculate the total count of values within a specified range.

\begin{lstlisting}[language=Python]

def analyze_stem_leaf_plot(stem_leaf_data, min_value, max_value):
    count = 0
    for stem, leaves in stem_leaf_data.items():
        for leaf in leaves:
            value = int(stem) * 10 + leaf
            if min_value <= value < max_value:
                count += 1
    return count
\end{lstlisting}

3. calculate\_range
Calculate the range (difference between the maximum and minimum) of a list of numbers.

\begin{lstlisting}[language=Python]
def calculate_range(numbers):
    return max(numbers) - min(numbers)
\end{lstlisting}

\textbf{React Agent}

1. calculate\_total\_cost\_general: Calculate the total cost given a unit price and quantity, supporting both the quantity as a string or an integer.

\begin{lstlisting}[language=Python]
def calculate_total_cost_general(unit_price, quantity):
    return float(unit_price) * (int(quantity) if isinstance(quantity, str) else quantity)
\end{lstlisting}
\section{Case Study}
We present then three case studies for the trained GPT-4+ agent on three different datasets, to identify why the well-optimized learned function leads to the correct result in each case study.
\subsection{Case Study for MATH}
\vspace{-0.3in}
\begin{figure}[H]
\centering
\includegraphics[width=0.73\textwidth]{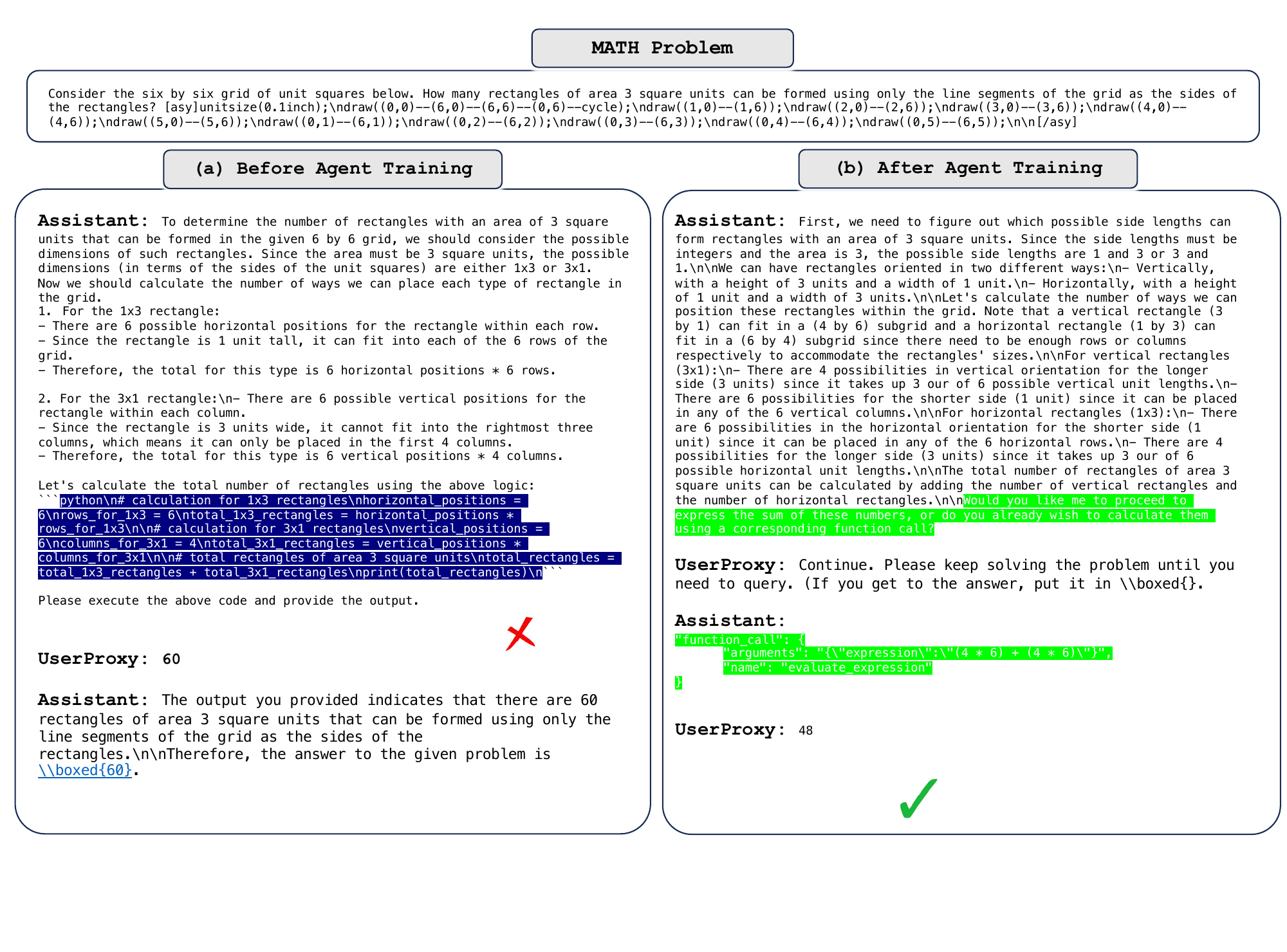}
\vspace{-0.55in}
\caption{
Comparisons of GPT-4+ Agent agents before and after agent training. After the training, the well-optimized learned function leads to correct result compared to real-time generated python code.
}
\label{case study: math}
\end{figure}
\vspace{-0.3in}
\subsection{Case Study for GAIA}
\begin{figure}[H]
\centering
\includegraphics[width=0.73\textwidth]{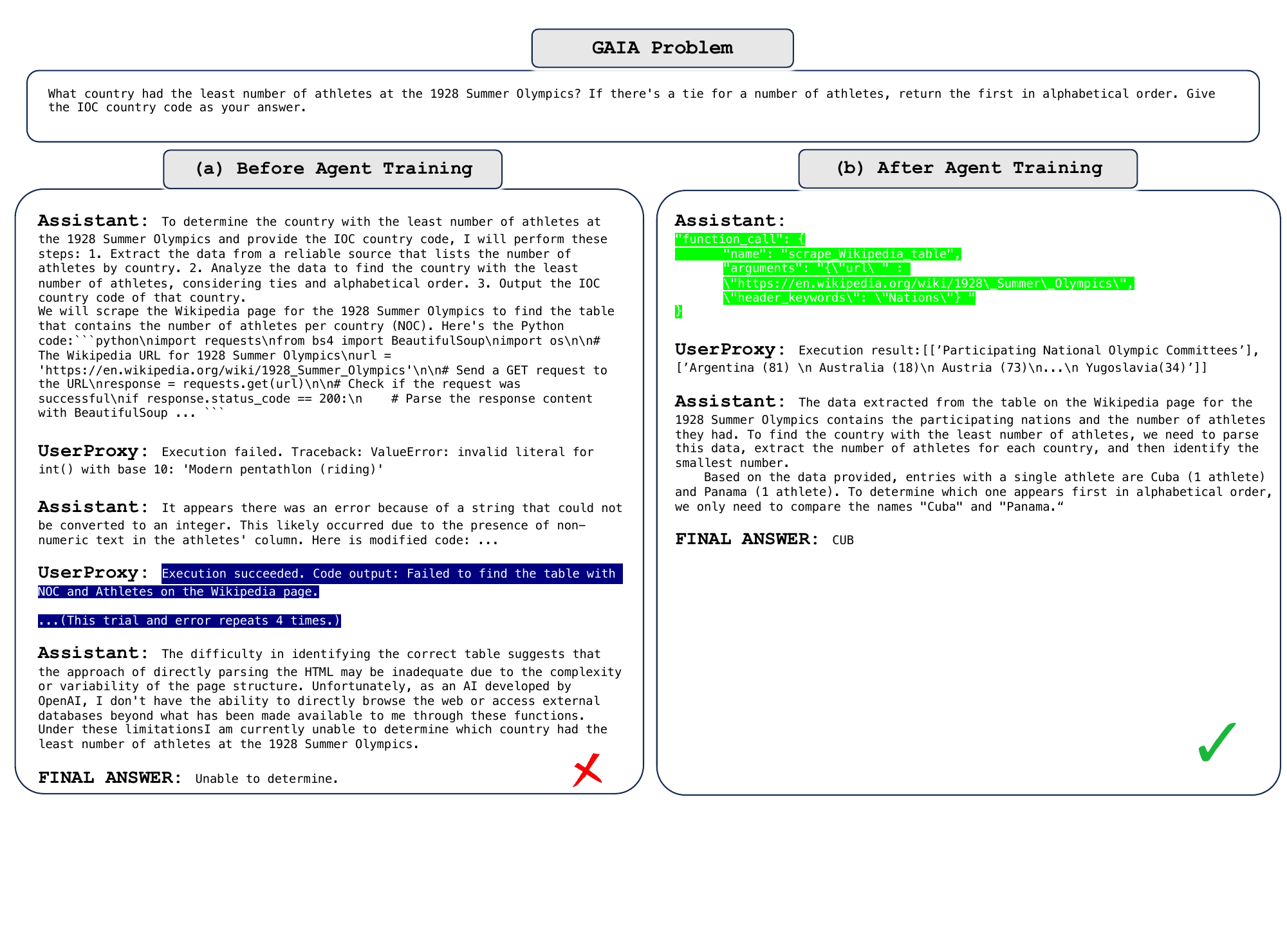}
\vspace{-0.55in}
\caption{
Comparisons of GPT-4+ Agent task-solving trajectory on GAIA before and after agent training. After training, the agent can successfully leverage the well-optimized function to handle complex web-scraping problems and solve the otherwise coding-heavy task. The suggested python code by assistant is truncated for simplicity.
}
\label{case study: math}
\end{figure}

\subsection{Case Study for TabMWP}
\begin{figure}[H]
\vspace{-0.18in}
\centering
\includegraphics[width=0.9\textwidth]{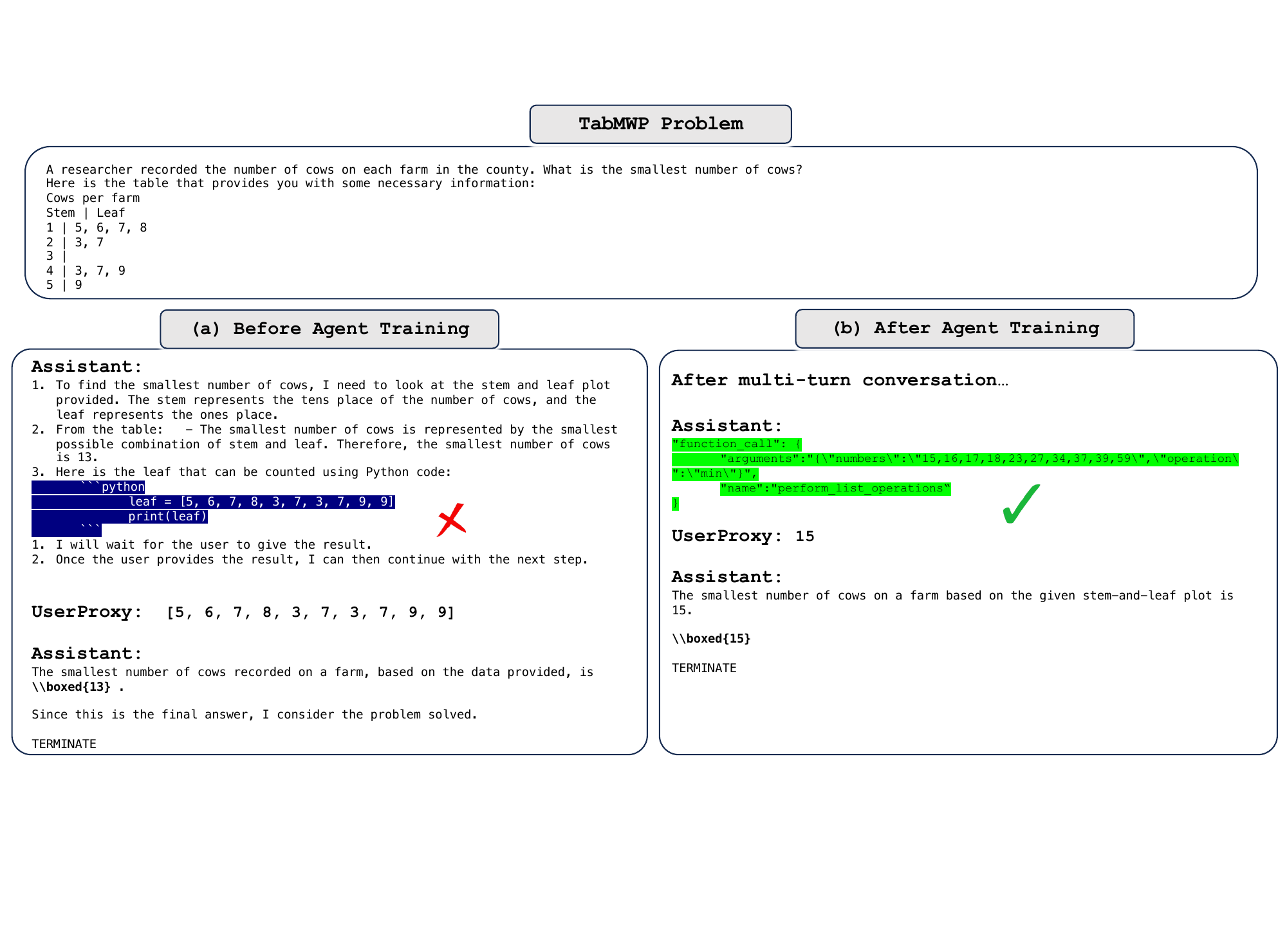}
\vspace{-0.8in}
\caption{
Comparisons of GPT-3.5-turbo + Agent task-solving trajectory on TabMWP before and after agent training. After training, the agent can successfully leverage the well-optimized function to obtain an accurate result.
}
\label{case study: math}
\end{figure}

\section{Hyperparameters Settings}
The proposed agent training method involves several hyperparameters, including the training epoch, early-stop threshold, and maximum number of actions.
In our empirical experiments across all three datasets, we consistently utilized the same hyperparameter configuration for the proposed agent training algorithm. Specifically:
(1) We set the training epoch to 10 for all experiments.
(2) An early stopping criterion was established with a threshold of 10 epochs. If there were 10 consecutive epochs without any improvement in training performance, the training process terminated.
(3) Additionally, we restricted the maximum number of actions taken during each function update step to 3.
It is essential to recognize that optimal hyperparameter settings can vary based on the specific problem and task. However, for our research, we kept these parameters fixed to ensure a consistent experimental setup. 
Combining our algorithm with hyperparameter tuning techniques from previous work~\cite{li2018hyperband, wu2021frugal, zhang2022targeted} may further enhance performance.

\section{Limitations}

A significant bottleneck in the agent training algorithm arises from that the size of training data is limited by the LLM context limit.
This constraint severely restricts its applicability to large-scale training scenarios.
Furthermore, in Section~\ref{sec:batch_train}, we empirically demonstrate that directly applying batch training techniques from traditional machine learning to agent training is ineffective and presents a non-trivial challenge. We regard addressing the limitations as our follow-up work.

\end{document}